  \providecommand\BibTeX{{%
    \normalfont B\kern-0.5em{\scshape i\kern-0.25em b}\kern-0.8em\TeX}}}
\newtheorem{thm}{\hspace{- 0.18 in} {\bf Theorem}}
\newtheorem{lem}[thm]{\hspace{- 0.18 in} {\bf Lemma}}
\theoremstyle{definition}
\theoremstyle{remark}
\renewcommand\footnotetextcopyrightpermission[1]{} 
\newcommand{\tr}{\operatorname{tr}}
\begin{document}
\title{Conversational Contextual Bandit: Algorithm and Application}

\author{Xiaoying Zhang}
\authornote{The work was done when the first author was an intern at Bytedance AI Lab.}
\affiliation{%
  \institution{CSE, The Chinese University of Hong Kong}
}
\email{xyzhang@cse.cuhk.edu.hk}

\author{Hong Xie}
\affiliation{%
  \institution{College of Computer Science, Chongqing University}
}
\email{xiehong2018@cqu.edu.cn}

\author{Hang Li}
\affiliation{%
  \institution{AI Lab, Bytedance}
  }
\email{lihang.lh@bytedance.com}

\author{John C.S. Lui}
\affiliation{%
  \institution{CSE, The Chinese University of Hong Kong}
}
\email{cslui@cse.cuhk.edu.hk}

\begin{abstract}
Contextual bandit algorithms provide principled online learning solutions to balance the exploitation-exploration trade-off in various applications such as recommender systems.
However, the learning speed of the traditional contextual bandit algorithms is often slow
 due to the need for extensive exploration.
This poses a critical issue in applications like recommender systems,
since users may need to provide feedbacks on a lot of uninterested items.
To accelerate the learning speed, we generalize contextual bandit to \textit{conversational contextual bandit}.
Conversational contextual bandit leverages not only behavioral feedbacks on arms (e.g., articles in news recommendation), but also occasional conversational feedbacks on key-terms from the user.
Here, a key-term can relate to a subset of arms, for example, a category of articles in news recommendation.
We then design the Conversational UCB algorithm (ConUCB) to address two challenges in conversational contextual bandit: 
(1) which key-terms to select to conduct conversation,
(2) how to leverage conversational feedbacks 
to accelerate the speed of bandit learning.
We theoretically prove that ConUCB can achieve a smaller regret upper bound 
than the traditional contextual bandit algorithm LinUCB,
which implies a faster learning speed.
Experiments on synthetic data, as well as real datasets from Yelp and Toutiao,
demonstrate
 the efficacy of the ConUCB algorithm. 
\end{abstract}


\maketitle
\fancyfoot{}
\thispagestyle{empty} 

\section{Introduction}

Contextual bandit serves as an invaluable tool for enhancing performance of a system through learning from interactions with the user while making trade-off between exploitation and exploration~\cite{abbasi2011improved,li2010contextual,li2016collaborative,wang2016learning}.
The contextual bandit algorithms have been applied to recommender systems, for
instance, to adaptively learn users' preference on items.
In this application,
the items are taken as the arms in contextual bandit, 
and the contextual vector of each arm/item contains
the observed information about the user and the item at the time.
The recommender system equipped with a contextual bandit algorithm sequentially recommends items to the user. 
The user provides a feedback (e.g., click) on the recommended item each round, which is viewed
as a reward.
The goal of the contextual bandit algorithm is to learn an item recommendation (arm
selection) strategy  to optimize the user's feedbacks in the long run
(cumulative rewards),  via utilizing the information of the user and items (contextual
vectors) as well as the user's feedbacks (rewards).  
In general, the algorithm needs to make a trade-off between exploitation (i.e.,
leveraging the user's preference already known) and exploration (i.e., revealing the user's
preference still unknown).

One shortcoming of the traditional contextual bandit algorithms~\cite{abbasi2011improved,li2010contextual,li2016collaborative,wang2016learning} lies in their slow learning speed. This is because they need to perform extensive exploration in order to collect sufficient feedbacks.
For applications like recommender systems, it poses a critical issue, because it
means that the user needs to provide feedbacks on a large number of items 
which she is not interested in.

Recently, a number of researchers propose the
construction of conversational recommender systems that leverage conversations to elicit
users' preference for better recommendation (e.g., ~\cite{christakopoulou2016towards,christakopoulou2018q}).
Inspired by this, we consider a novel contextual bandit setting in this paper, 
i.e., \textit{conversational contextual bandit}, which incorporates a conversation
mechanism into the traditional contextual bandit algorithm (as shown in
Figure~\ref{fig:ConBandit}), for accelerating bandit learning.
\begin{figure}[t]
  \centering
  \includegraphics[width=0.9\linewidth]{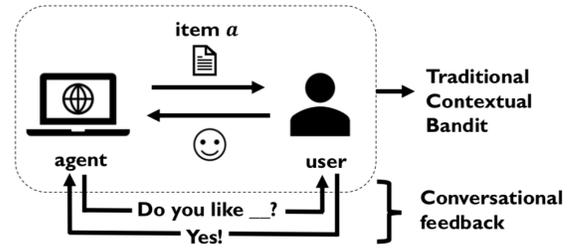}
  \caption{Conversational contextual bandit. The part in dashed box corresponds to traditional contextual bandit.}
  \label{fig:ConBandit}
  \vspace{-6mm}
\end{figure}

To illustrate the benefits of the conversation mechanism, let us consider the news
recommendation scenario with conversational contextual bandit.  
In this scenario, the agent/system also occasionally asks
questions with regard to the user's preference on key-terms.  
For example, asking about the user's preference on a category:
``Are you interested in news about {\it basketball}?'',
or asking about the user's preference on an entity: ``Do you like to read news
related to {\it
  LeBron James}?''.
There are two reasons why conversations can improve the learning speed.
First, the key-terms like ``basketball'' and ``LeBron James'' can be
associated with a large number of articles.  
Thus feedback on one key-term may contain a large amount of information about
the user's preference.  
Traditional contextual bandit algorithms~~\cite{abbasi2011improved,li2010contextual,li2016collaborative,wang2016learning} 
may spend many interactions to collect the information,
because many articles, that are related to the same key-term, may have different contextual vectors.  
For example, the contextual vector of an article about NBA games
may be far from that of  an article about basketball shoes, 
though they are all
related to the key-term ``basketball''.
Second, collecting explicit feedback from the user can help the system to capture the user's true preference faster.
For example, the fact that the user reads 
an article of NBA game with ``LeBron James'' 
may be because she concerns about 
the result of the game or because she is a fan of LeBron James.
To figure out which is more likely,
instead of recommending articles related to each possibility,
 a more convenient way is to directly
ask the user ``Do you want to read more news about \textit{LeBron James}?''.


In this paper, the agent conducts conversations by asking the user's
preference on key-terms.
Here, a key-term is related to a subset of arms, and can be a category or topic.
We assume that there exists a bipartite graph of key-terms and arms in which the relations between key-terms and arms are represented.
The agent occasionally selects key-terms 
and asks the user about her preference over the key-terms.  
Then, the user shows her feedbacks to the agent, 
for example, indicating whether she is interested in the key-terms.
The preference over the key-term is then ``propagated'' to the related arms,
and is leveraged by the algorithm to select arms and update the bandit model.
To the best of our knowledge, this is the first time such an approach is
proposed.

There are two main challenges for conversational contextual bandit:
\textit{(1) which key-terms to select for querying, (2) how to leverage conversational feedbacks
to build 
a more accurate bandit model.
}
We design the Conversational UCB algorithm (ConUCB) to address these challenges.

The ConUCB algorithm, as a generalization of
the LinUCB algorithm~\cite{abbasi2011improved}, repeatedly conducts learning as follows:
(1) If conversation is allowed at the round, given the current context and
historical interactions on arms and key-terms, ConUCB selects the key-terms that
reduce the learning error most, and enquires the user's preference over the
key-terms;
(2) It selects the arm 
with the largest upper confidence bound derived from
both arm-level and key-term-level feedbacks, and receives a reward. 
The interaction record will then be leveraged for key-term selection. 
Theoretical analysis shows 
that ConUCB achieves a lower cumulative regret upper bound 
than the standard contextual bandit algorithm LinUCB. 
Note that a smaller regret upper bound  means that the agent learns 
parameters more accurately within the same number of iterations, 
i.e., the speed of bandit learning is accelerated.
Experimental results on a synthetic dataset, as well as real datasets from Yelp
and Toutiao (the largest news recommendation platform in China) 
show that the ConUCB
algorithm  significantly outperforms 
the baseline algorithms including LinUCB.
Note that the basic mechanism of adding
conversation into bandit algorithms to speed up learning is generic.
We demonstrate that ConUCB can be easily extended to other
contextual bandit algorithms such as hLinUCB~\cite{wang2016learning}.

In summary, our contributions are as follows:
\begin{itemize}
\item We formulate the new conversational contextual bandit problem to
  improve the learning speed (Section~\ref{sec:model}).
\item We design the ConUCB algorithm by generalizing LinUCB for conversational contextual bandit and
  theoretically prove that it can achieve a smaller regret upper bound 
  than the conventional contextual bandit algorithm LinUCB, i.e., the learning
  speed is improved (Section~\ref{sec:algorithm}).
\item We empirically verify the improvement of ConUCB in learning speed  
using both
a synthetic dataset and real-world datasets (Section~\ref{sec:exp_syn} \& \ref{sec:real_data}).
\item We also extend ConUCB by adding the conversation mechanism into hLinUCB (Section~\ref{sec:discussion}).

\end{itemize}

\section{Problem Formulation}
\label{sec:model}
We first introduce traditional contextual bandit and then we generalize it to obtain conversational contextual bandit.  
Without loss of generality, we take the UCB algorithm as an example algorithm of contextual bandit.

\subsection{Contextual Bandit}
In contextual bandit, the agent learns to maximize the cumulative reward in 
$T \in \mathbb{N}_+$ rounds through interaction with the user.

Consider a finite set of $N$ arms denoted by $\mathcal{A}$.
At each round $t=1,\ldots, T$, the agent is given a subset of arms
$\mathcal{A}_t \subseteq \mathcal{A}$, and each arm $a \in \mathcal{A}_t$ is
associated with a $d$-dimensional contextual vector $\bm{x}_{a,t} \in
\mathbb{R}^d$, which describes the observable information of arm $a$ and the
user at round $t$. The agent chooses an arm $a_t$ on the basis of its contextual vector $\bm{x}_{a_t,t}$, shows it to the user, and receives a reward $r_{a_t,t} \in \mathcal{R}$. For example, $\mathcal{R}=\{0,1\}$ represents a binary reward, and $\mathcal{R}=\mathbb{R}$ represents a continuous reward.
The reward $r_{a_t,t}$ is a function of the contextual vector $\bm{x}_{a_t,t}$ and the parameter vector $\bm{\theta}$. The parameter vector $\bm{\theta}$ represents the user's preference and is what the agent wants to learn.
At each round $t$, the agent takes the selected arms $a_1, \ldots, a_{t-1}$ and
received rewards $r_{a_1, 1}, \ldots, r_{a_{t-1}, t-1}$ at the previous
rounds as input, to estimate the reward and select the arm $a_t \in \mathcal{A}_t$. 

The goal of the agent is to maximize the expected cumulative rewards.  
Let $\sum^T_{t=1}\mathbb{E}[r_{a^\ast_t,t}]$ denote the maximum expected
cumulative rewards in $T$ rounds, where $a^\ast_t \in \mathcal{A}_t$ is the
optimal arm at round $t$, i.e., $\mathbb{E}[r_{a^{\ast}_t,t}] \geq
\mathbb{E}[r_{a,t}], \forall a \in \mathcal{A}_t$. The goal of contextual bandit
learning is formally defined as minimization of the cumulative regret in $T$ rounds:
\begin{equation}
 \textstyle  R(T) \triangleq
  \sum\nolimits_{t=1}^T \left(
  \mathbb{E}[r_{a^*_t, t}]
  -
  \mathbb{E}[r_{a_t, t}]
  \right).
  \label{equ:regret}
\end{equation}
The agent needs to make a trade-off between exploitation 
(i.e., choose the best arm estimated from received feedbacks) 
and exploration (i.e., seek feedbacks from arms that the agent is unsure about). 

Let us consider the UCB (Upper Confidence Bound) algorithm.
The agent selects the arm $a_t$ at each round $t$ as follows:
\begin{equation}
\textstyle a_t = \arg\max_{a \in \mathcal{A}_t} 
 R_{a,t} + C_{a,t}, \nonumber
\end{equation}
where $R_{a,t}$ 
and $ C_{a,t}$ are the estimated reward and
confidence interval of arm $a$ at round $t$ respectively.  
The confidence interval $ C_{a,t}$ measures the uncertainty of 
reward estimation of arm $a$ at round $t$.
In the LinUCB algorithm~\cite{abbasi2011improved}, the reward function is defined as a linear function,
\begin{equation}
 \textstyle r_{a_t,t} = \bm{x}_{a_t,t}^T  \bm{\theta} + \epsilon_t,
  \label{equ:reward}
\end{equation}
where $\bm{\theta}$ is a $d$-dimensional parameter vector, $\epsilon_t$ is a random variable representing the random noise. One can also generalize LinUCB by utilizing non-linear reward~\cite{filippi2010parametric} or hidden features~\cite{wang2016learning}.

In news recommendation with contextual bandit, for example, the agent is the
recommender system, each arm $a$ is an article, the contextual vector
$\bm{x}_{a,t}$ denotes the observable information of the user and the article
$a$ at round $t$, and the reward $r_{a,t}$ is whether the user clicks article $a$ at round $t$. The parameter vector $\bm{\theta}$ represents the user's preference on articles. The goal of the system is to maximize the cumulative click-through rate (CTR) of the user.

\subsection{Conversational Contextual Bandit}
In conversational contextual bandit, the agent still learns to maximize the cumulative reward in $T$ rounds through interacting with the user. 
In addition to collecting feedbacks on selected arms, the agent also occasionally conducts conversations with the user and learns from conversational feedbacks.

We consider that the agent conducts conversations by asking the user's
preference on \textit{key-terms}, where a key-term is related to a subset of
arms, and can be a category, topic, etc. For example, in news recommendation, a
key-term can be a keyword, a key-phrase, or a combination of multiple single
keywords, extracted from articles. The agent may ask the user ``Are you
interested in news about {\it basketball}?''  
The user shows her answer to the question.

\noindent
\textbf{Problem formulation.} 
Consider a finite set of $N \in \mathbb{N}_+$ arms denoted by $\mathcal{A}$ 
and a finite set of $K \in \mathbb{N}_+$ key-terms denoted by $\mathcal{K}$.
The relationships between the arms and the key-terms are characterized by a
weighted bipartite graph $(\mathcal{A}, \mathcal{K}, \bm{W})$, 
whose nodes are divided into two sets $\mathcal{A}$ and $\mathcal{K}$, 
and weighted edges are represented by the matrix $\bm{W} \triangleq [w_{a,k}]$.   
Here $w_{a,k}$ represents the relationship between 
arm $a$ and key-term $k$.   
Without loss of generality, 
we normalize the total weights associated with each arm to be 1, 
i.e., $\sum_k w_{a,k}=1$.  

We also introduce a function $b(t)$ to model the frequency of conversations.
Note that $b(t)$ determines:
(1) whether to converse at round $t$;
(2) the number of conversations until round $t$.
To make it clearer, consider a function $q(t)$:
\begin{equation}
\textstyle q(t) = \left\{ \begin{array}{cl}
   1, & b(t)-b(t-1) > 0, \\
   0, & \mbox{otherwise}.  
\end{array} \right. \nonumber
\vspace{-2mm}
\end{equation}
If $q(t)=1$, then the agent conducts conversation with the user at round $t$;
if $q(t)=0$, then it does not.
The agent conducts $b(t)$ conversations up to round $t$.
For example, if $b(t)= k \lfloor\frac{t}{m}\rfloor, m \ge 1, k \ge 1$, then the agent makes
$k$ conversations in every $m$ rounds. If $b(t)= \lfloor \log(t) \rfloor$, then
the agent makes a conversation with a frequency represented by the logarithmic
function of $t$. If $b(t)\equiv 0$, then there is no conversation between the
agent and the user.
Moreover, we assume that key-term-level conversations should 
be less frequent than arm-level interactions, i.e., $b(t)\leq t, \forall t$, 
in consideration of users' experience.

At each round $t=1,\ldots, T$, the agent is given a subset of arms
$\mathcal{A}_t \subseteq \mathcal{A}$, and each arm $a \in \mathcal{A}_t$ is
specified by a contextual vector $\bm{x}_{a,t} \in \mathbb{R}^d$.
 Without loss of generality, we normalize the contextual vector 
such that $\|\bm{x}_{a,t}\|_2=1$.
Based on the arm-level feedbacks and conversational feedbacks received in the
previous $t-1$ rounds, 
\begin{itemize}
\item If $q(t)=1$, the agent conducts $\lfloor b(t)- b(t-1) \rfloor$ conversations with the user.
  In each conversation, the agent asks the user's preference on one selected
  key-term $k \in \mathcal{K}$, and gets the user's feedback $\tilde{r}_{k, t}$.
  For example, $\tilde{r}_{k,t}$ can be binary $\{0, 1\}$,  where $0$ and $1$ stand for negative and positive feedback respectively.
\item The agent chooses an arm $a_t \in \mathcal{A}_t$,
  presents it to the user,  and receives the reward of arm $a_t$, denoted by $r_{a_t,t}$.
\end{itemize}

The agent tries to speed up the learning process by leveraging conversational feedbacks. The problems in conversational contextual bandit are to find 
(a) an effective arm selection strategy and 
(b) an effective key-term selection strategy, so that after $T$ rounds of arm-level interactions and $b(T)$ times of conversations, the cumulative regret in Eq.~(\ref{equ:regret}) is significantly reduced.

\noindent
\textbf{General algorithm of conversational UCB (ConUCB).} Algorithm
\ref{alg:CCB} outlines the general algorithm of ConUCB,
which will be described in detail in Section~\ref{sec:algorithm}.
 One can see that when no conversation is performed between the agent and the
 user, conversational contextual bandit degenerates to standard contextual
 bandit. In ConUCB, the agent selects an arm with the following strategy
\begin{equation}
  \textstyle a_t = \arg\max\nolimits_{a \in \mathcal{A}_t} \tilde{R}_{a,t}+ C_{a,t},
  \label{equ:ConUCB_o}
\end{equation}
where $\tilde{R}_{a,t}$ and $C_{a,t}$ are the estimated reward and the
confidence interval of arm $a$ at round $t$ respectively.
As will be shown in Section~\ref{sec:algorithm} later, $ \tilde{R}_{a,t}$ is inferred from \textit{both} arm-level and key-term-level
feedbacks, and $C_{a,t}$ contains \textit{both} arm-level and key-term-level confidence interval. 

\begin{algorithm}[t]
  \caption{{\bf General algorithm of ConUCB}}
   \label{alg:CCB}
	\KwIn{arms $\mathcal{A}$, key-terms $\mathcal{K}$, graph $(\mathcal{A}, \mathcal{K}, \bm{W})$, $b(t)$. }
  \For{ $t=1,\ldots, T$}
      {
        observe contextual vector $\bm{x}_{a,t}$ of each arm $a \in \mathcal{A}_t$\;
        If conversation is allowed at round $t$, i.e., $q(t)=1$, select key-terms to conduct conversations and receive conversational feedbacks $\{\tilde{r}_{k,t}\}$\;
        select an arm $a_t = \arg\max_{a \in \mathcal{A}_t} \tilde{R}_{a,t} + C_{a,t}$ \;
        receive a reward $r_{a_t,t}$\;
        update model \;
      }
      \vspace{-2mm}
\end{algorithm}
\setlength{\textfloatsep}{0pt}

In ConUCB,  
the user's feedback on key-term $k$, i.e., $\tilde{r}_{k,t}$, 
is estimated from the user's feedbacks on related arms, i.e.,
\begin{equation}
 \textstyle \mathbb{E}[\tilde{r}_{k,t}]
  = \sum\nolimits_{a\in \mathcal{A}} 
  \frac{w_{a,k}}{\sum_{a' \in \mathcal{A}} w_{a',k}} 
  \mathbb{E}[r_{a,t}],\quad k \in \mathcal{K}.
  \label{con_feedback_model}
\end{equation}
Equivalently, 
$\tilde{r}_{k,t} 
= \sum_{a\in \mathcal{A}}
  \frac{w_{a,k}}{\sum_{a' \in \mathcal{A}} w_{a',k}}
  \mathbb{E}[r_{a,t}]
   +\tilde{\epsilon}_{k,t}$,
   where $\tilde{\epsilon}_{k,t}$ is a random variable representing the random noise in reward.
When both the conversational feedback $\tilde{r}_{k,t}$ and arm-level feedback $r_{a,t}$ are binary, Eq.~(\ref{con_feedback_model}) has a probabilistic interpretation. Specifically, for binary feedback $c \in \{0, 1\}$, we have
\begin{equation}\label{con_feedback_model2}
 \textstyle \mathbb{P}(c|k,t)=\sum\nolimits_{a \in \mathcal{A}} \mathbb{P}(a|k)\mathbb{P}(c|a,t),
\end{equation}
where $\mathbb{P}(c=1|k,t)$ and $\mathbb{P}(c=0|k,t)$ represent the
probabilities that the user gives a positive and negative feedback to the
question with key-term $k$ at round $t$ respectively;
$\mathbb{P}(c=1|a,t)$ and $\mathbb{P}(c=0|a,t)$ denote the probabilities that
the user likes and dislikes arm $a$ at round $t$ respectively. Thus, if we take
$\mathbb{E}[r_{a,t}]= \mathbb{P}(c=1| a,t)$,
$\mathbb{E}[\tilde{r}_{k,t}]=\mathbb{P}(c {=} 1 | k,t)$, and $\mathbb{P}(a|k) {=}
\frac{w_{a,k}}{\sum_{a' \in \mathcal{A}} w_{a',k}}$, we obtain Eq.
(\ref{con_feedback_model2}).


\section{Algorithm \& Theoretical Analysis}
\label{sec:algorithm}
In this section, we present the details of ConUCB by providing specific solutions to the two problems in
Algorithm~\ref{alg:CCB}: (1) how to select key-terms to conduct conversation
(Line 3), (2) how to select an arm, i.e.,
calculate $\tilde{R}_{a,t}$ and $C_{a,t}$ (Line 4).
ConUCB is a generalization of LinUCB~\cite{abbasi2011improved} in the sense that
the arm-level reward
function is the same as that of LinUCB in Eq.~(\ref{equ:reward}).
We theoretically analyze the upper bound of its cumulative regret and discuss the
impact of conversational feedbacks.
We note that ConUCB has a generic mechanism of 
selecting key-terms and leveraging feedbacks on key-terms to speed up learning, which can be easily incorporated into a variety of contextual bandit algorithms  such as CoFineUCB~\cite{yue2012hierarchical},
  hLinUCB~\cite{wang2016learning}.  
In Section~\ref{sec:discussion}, we explain how to 
apply the same technique to the hLinUCB algorithm.

\subsection{ConUCB Algorithm}

 The ConUCB algorithm is described in 
 Algo.~\ref{alg:ConUCB_all}.
 It contains a key-term selection module to select
 key-terms (line 2-11) and an arm-selection module to select arms (line 12-15).
 The two modules collaborate with each other as follows:
\begin{itemize}
  \item If conversation is allowed at round $t$, given the current context and interaction histories on both arms and key-terms, the key-term selection module repeatedly selects the key-term that minimizes the regret and asks the user's preference over it (line 5). 
  Then the newly estimated key-term-level parameter vector
    ($\tilde{\bm{\theta}}_t$) is passed to the arm-selection module.
  \item  Under the guidance of $\tilde{\bm{\theta}}_t$ and rewards received up to round $t$, the arm-selection module recommends an arm to the user,
    and receives a reward (line 12-15).
    The interaction record will then be
    leveraged by 
    the key-term selection module. 
  \end{itemize}

\subsubsection{\textbf{Arm selection.}}
At round $t$, ConUCB first  estimates the user's preference at the key-term level, denoted as
$\tilde{\bm{\theta}}_t$, by

\resizebox{\columnwidth}{!}{
   \begin{minipage}{\columnwidth}
\begin{align}
     \textstyle \boldsymbol{\tilde{\theta}}_t 
      {=} \arg\min_{\boldsymbol{\tilde{\theta}}} 
\sum^t_{\tau =1} 
\sum_{k \in \mathcal{K}_\tau }
\left( \frac{\sum_{a \in \mathcal{A}} w_{a,k} \boldsymbol{\tilde{\theta}}^T \boldsymbol{x}_{a,\tau}}{\sum_{a \in \mathcal{A}} w_{a,k}} {-} \tilde{r}_{k,\tau} \right)^2 {+} \tilde{\lambda} \| \boldsymbol{\tilde{\theta}}\|_2^2, \nonumber 
    \label{equ:tilde_theta}
\end{align}
\vspace{1mm}
\end{minipage}}
where $\mathcal{K}_\tau$ denotes the set of key-terms queried at round $\tau$. We set $\mathcal{K}_\tau = \emptyset$ if no key-term is queried,
and we let $\mathcal{K}_\tau$ contain duplicated elements if querying a
key-term multiple times at the round. The coefficient $\tilde{\lambda} \in
\mathbb{R} $ controls regularization.
Then $
\boldsymbol{\tilde{\theta}}_t$ is used to guide the learning of the arm-level parameter
vector at round $t$:
   \begin{equation}
     \begin{aligned}
      \textstyle \boldsymbol{\theta_t} 
       {=} \arg\min_{\boldsymbol{\theta}} \ \lambda \sum_{\tau=1}^{t-1}(\boldsymbol{\theta}^T\boldsymbol{x}_{a_{\tau},\tau} {-} r_{a_{\tau},\tau})^2 {+}(1 {-} \lambda)\|\boldsymbol{\theta}-\boldsymbol{\tilde{\theta}}_t\|_2^2.  \nonumber
     \end{aligned}
     \label{equ:theta}
   \end{equation}
where $\lambda {\in} [0,1]$ balances learning from rewards at arm-level and learning from feedbacks at key-term-level (i.e., $\bm{\tilde{\theta}}_t$).
Both optimization problems have closed-form solutions,
   $\boldsymbol{\tilde{\theta}}_t=\boldsymbol{\tilde{M}}_t^{-1} \boldsymbol{\tilde{b}}_t$
   and $\boldsymbol{\theta}_t=\boldsymbol{M}_t^{-1}(\boldsymbol{b}_t+(1-\lambda) \boldsymbol{\tilde{\theta}}_t)$, where
   \begin{align}
       &\textstyle \boldsymbol{\tilde{M}}_t 
      {=} \sum_{\tau=1}^t \sum_{k \in \mathcal{K}_{\tau}} 
      \!\!\left( \frac{\sum_{a \in \mathcal{A}} w_{a,k}\boldsymbol{x}_{a,\tau}}{\sum_{a \in \mathcal{A}} w_{a,k}} \right) \!\! \left( \frac{\sum_{a \in \mathcal{A}} w_{a,k}\boldsymbol{x}_{a,\tau}}{\sum_{a \in \mathcal{A}} w_{a,k}} \right)^T \!{+}\tilde{\lambda}\boldsymbol{I}, \nonumber \\
       &\textstyle \boldsymbol{\tilde{b}}_t=\sum\nolimits_{\tau=1}^t \sum\nolimits_{k \in \mathcal{K}_{\tau}}  \left( \frac{\sum_{a \in \mathcal{A}} w_{a,k}\boldsymbol{x}_{a,\tau}}{\sum_{a \in \mathcal{A}} w_{a,k}} \right)  \tilde{r}_{k,\tau}, \nonumber\\
       &\textstyle \boldsymbol{M}_t=\lambda
       \sum\nolimits_{\tau=1}^{t-1} \boldsymbol{x}_{a_\tau,\tau} 
      \boldsymbol{x}_{a_\tau,\tau}^T+(1-\lambda)\boldsymbol{I},  
      \nonumber
      \\
     &   \textstyle \boldsymbol{b}_t=\lambda \sum\nolimits_{\tau=1}^{t-1} 
      \boldsymbol{x}_{a_\tau,\tau}r_{a_\tau,\tau} .
     \label{equ:closed_solution}
   \end{align}

 To apply the arm-selection strategy in Eq.~(\ref{equ:ConUCB_o}) , we
  need to derive the confidence interval
 $C_{a,t}$.
 Based on the closed-form solutions of $\bm{\theta}_t$ and $\bm{\tilde{\theta}}_t$ in Eq.~(\ref{equ:closed_solution}), we can prove that Lemma~\ref{lem:conf_int} holds, and thus 
 \begin{align}
 \textstyle C_{a,t}= \lambda\alpha_t
      \|\boldsymbol{x}_{a,t}\|_{\boldsymbol{M}_t^{-1}} + (1-\lambda) \tilde{\alpha}_t
       \|\boldsymbol{x}_{a,t}^T\boldsymbol{M}_t^{-1}\|_{\boldsymbol{\tilde{M}}_t^{-1}}, \nonumber
\end{align}
 where $\alpha_t$ is defined in Lemma1, and $\tilde{\alpha}_t$ represents the estimation error of $\bm{\tilde{\theta}}_t$ and is determined by how the agent
 selects key-terms.
 We will discuss $\tilde{\alpha}_t$ later in
  Section~\ref{sec:key-term-selection}, and show that $\tilde{\alpha}_t < \alpha_t$.

Consequently, ConUCB selects an arm according to the following strategy (Eq.~(\ref{equ:ConUCB_o})):
  \begin{align}
    \textstyle a_t=\arg \max_{a \in \mathcal{A}_t} \underbrace{\bm{x}_{a,t}^T\bm{\theta}_t}_{\tilde{R}_{a,t}}+\underbrace{  \lambda\alpha_t
     \|\boldsymbol{x}_{a,t}\|_{\boldsymbol{M}_t^{-1}} + (1-\lambda) \tilde{\alpha}_t
    \|\boldsymbol{x}_{a,t}^T\boldsymbol{M}_t^{-1}\|_{\boldsymbol{\tilde{M}}_t^{-1}} }_{C_{a,t}}.
    \label{equ:arm_selection}
    \vspace{-8mm}
  \end{align}
In Eq.~(\ref{equ:arm_selection}), $\tilde{R}_{a,t}$ is the estimated reward of arm
$a$ at round $t$, based on the current estimated parameter vectors at arm-level and
key-term level.
It represents exploitation of currently promising arms.
$C_{a,t}$ denotes the uncertainty in reward estimation of arm $a$, which contains two parts: (1) uncertainty from the noise in arm rewards received until round $t$ (the first term) ; (2) uncertainty from the estimated key-term-level parameter vector $\tilde{\bm{\theta}}_t$ (the
second term).
It represents exploration of currently less promising arms.
It is easy to verify that $C_{a,t}$ shrinks when more interactions between the agent and user are carried out, and thus the exploitation and exploration are balanced.
Moreover, one can see that the second term of $C_{a,t}$ shrinks more
quickly than the first term (more details can be fund at  Eq.~(\ref{appendix_equ:x_M_M_M_x}) in Appendix), indicating the benefit of conversation.

   \begin{algorithm}[t]
\caption{{\bf ConUCB algorithm}}
  \label{alg:ConUCB_all}
	\KwIn{
	graph $(\mathcal{A},\mathcal{K},\bm{W})$,
  conversation frequency function $b(t)$.}
  \KwInit{ $\boldsymbol{\tilde{M}}_{0}= \tilde{\lambda}\boldsymbol{I}$,
    $\boldsymbol{\tilde{b}}_0=\boldsymbol{0}$, $\boldsymbol{M}_0=(1-\lambda)\boldsymbol{I}$, $\boldsymbol{b}_0=\bm{0}$. }
	\For{$t=1,2,...,T$}{
		\If{$b(t)-b(t-1)>0$}
		  {
        nq= $b(t)-b(t-1)$\;
       \While{nq$>0$}
       {
           Select a key-term $k \in \mathcal{K}$ according to
           Eq.~(\ref{equ:optimized_strategy}), and query the user's preference
           over it \;
           Receive the user's feedback $\tilde{r}_{k,t}$\;
    $\boldsymbol{\tilde{M}}_t=  \boldsymbol{\tilde{M}}_{t-1} +  \left(
      \frac{\sum_{a \in \mathcal{A}}w_{a,k}\boldsymbol{x}_{a,t}}{\sum_{a \in \mathcal{A}} w_{a,k}} \right)
    \left( \frac{\sum_{a \in \mathcal{A}} w_{a,k}\boldsymbol{x}_{a,t}}{\sum_{a \in \mathcal{A}} w_{a,k}}
    \right)^T $\;
    $\boldsymbol{\tilde{b}}_t= \boldsymbol{\tilde{b}}_{t-1} + \left( \frac{\sum_{a \in \mathcal{A}} w_{a,k}\boldsymbol{x}_{a,t}}{\sum_{a \in \mathcal{A}} w_{a,k}} \right)  \tilde{r}_{k,t} \nonumber$ \;
    nq-=1
    }
    }
    \Else{
      $\boldsymbol{\tilde{M}}_t=  \boldsymbol{\tilde{M}}_{t-1}$,
      $\boldsymbol{\tilde{b}}_t= \boldsymbol{\tilde{b}}_{t-1}$ \;
      }
  $\boldsymbol{\tilde{\theta}}_t=\boldsymbol{\tilde{M}}_t^{-1}
  \boldsymbol{\tilde{b}}_t$,
  $\boldsymbol{\theta}_t=\boldsymbol{M}_t^{-1} \left( \boldsymbol{b}_t +
    (1-\lambda) \boldsymbol{\tilde{\theta}}_t  \right)$ \;
  Select $a_t=\arg \max_{a\in \mathcal{A}_t} \boldsymbol{x}_{a,t}^T\boldsymbol{\theta}_t+ \lambda\alpha_t
      \|\boldsymbol{x}_{a,t}\|_{\boldsymbol{M}_t^{-1}} + (1-\lambda) \tilde{\alpha}_t
       \|\boldsymbol{x}_{a,t}^T\boldsymbol{M}_t^{-1}\|_{\boldsymbol{\tilde{M}}_t^{-1}}$\;
  Ask the user's preference on arm $a_t \in \mathcal{A}$ and receive the reward $r_{a_t,t}$ \;
  $\boldsymbol{M}_t= \boldsymbol{M}_t +\lambda \boldsymbol{x}_{a_t,t}\boldsymbol{x}_{a_t,t}^T$,  
\hspace{0.18 in}  
    $\boldsymbol{b}_t=  \boldsymbol{b}_t + \lambda \boldsymbol{x}_{a_t,t}r_{a_t,t} $ \;
	}
\vspace{-2mm}
\end{algorithm} 
\setlength{\textfloatsep}{0pt}

\begin{lem}
   Let  $\bm{\theta}_*$ and
 $\tilde{\bm{\theta}}_*$ denote the unknown true parameter
 vectors of the user at arm level and key-term level respectively.
  Assume that $\|\boldsymbol{\tilde{\theta}}_t-\boldsymbol{\tilde{\theta}}_*\|_{\boldsymbol{\tilde{M}}_t}\leq
   \tilde{\alpha}_t$, $\tilde{\epsilon}_{k,t}$ and $\epsilon_t$
  are conditionally $1$-sub-Gaussian, then for $\forall t, a\in \mathcal{A}$, with probability $1-\sigma$, the following inequality holds
    \begin{align}
    \textstyle 
    & |\boldsymbol{x}_{a,t}^T(\boldsymbol{\theta}_t -\boldsymbol{\theta}_*)|
    \leq 
     \lambda\alpha_t
      \|\boldsymbol{x}_{a,t}\|_{\boldsymbol{M}_t^{-1}} + (1-\lambda) \tilde{\alpha}_t
       \|\boldsymbol{x}_{a,t}^T\boldsymbol{M}_t^{-1}\|_{\boldsymbol{\tilde{M}}_t^{-1}}, \nonumber
      \end{align}
   where $\alpha_t= \sqrt{d \log\left( (1+\frac{\lambda
         t}{(1-\lambda)d})/\sigma \right)}$, and  $\|\bm{x}\|_{\bm{M}}=\sqrt{\bm{x}^T \bm{M} \bm{x}}$.
  \label{lem:conf_int}
 \end{lem}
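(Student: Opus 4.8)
The plan is to follow the standard \texttt{LinUCB} confidence‑ellipsoid argument, but to carry the extra ``key‑term guidance'' term coming from $\tilde{\bm\theta}_t$ all the way through. I would start from the closed forms $\bm\theta_t=\bm M_t^{-1}(\bm b_t+(1-\lambda)\tilde{\bm\theta}_t)$ and $\bm M_t,\bm b_t$ in Eq.~(\ref{equ:closed_solution}), together with the linear model $r_{a_\tau,\tau}=\bm x_{a_\tau,\tau}^T\bm\theta_*+\epsilon_\tau$ from Eq.~(\ref{equ:reward}). Writing $\bm b_t=(\bm M_t-(1-\lambda)\bm I)\bm\theta_*+\lambda\sum_{\tau=1}^{t-1}\bm x_{a_\tau,\tau}\epsilon_\tau$ and using that the conversational feedback model Eq.~(\ref{con_feedback_model}) forces the true key‑term parameter to coincide with the arm‑level one (i.e.\ $\tilde{\bm\theta}_*=\bm\theta_*$, since $\mathbb{E}[\tilde r_{k,t}]=\big(\sum_a w_{a,k}\bm x_{a,t}/\sum_a w_{a,k}\big)^T\bm\theta_*$), I obtain
\begin{align}
\bm\theta_t-\bm\theta_*
=\lambda\,\bm M_t^{-1}\!\sum_{\tau=1}^{t-1}\bm x_{a_\tau,\tau}\epsilon_\tau
+(1-\lambda)\,\bm M_t^{-1}\big(\tilde{\bm\theta}_t-\tilde{\bm\theta}_*\big). \nonumber
\end{align}
Left‑multiplying by $\bm x_{a,t}^T$ and applying the triangle inequality splits $|\bm x_{a,t}^T(\bm\theta_t-\bm\theta_*)|$ into a ``noise'' term and a ``key‑term'' term, which will become the two summands in the statement.

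The key‑term term $(1-\lambda)|\bm x_{a,t}^T\bm M_t^{-1}(\tilde{\bm\theta}_t-\tilde{\bm\theta}_*)|$ is pure linear algebra: inserting $\tilde{\bm M}_t^{1/2}\tilde{\bm M}_t^{-1/2}$ and applying Cauchy--Schwarz bounds it by $(1-\lambda)\,\|\bm x_{a,t}^T\bm M_t^{-1}\|_{\tilde{\bm M}_t^{-1}}\,\|\tilde{\bm\theta}_t-\tilde{\bm\theta}_*\|_{\tilde{\bm M}_t}$, and the hypothesis $\|\tilde{\bm\theta}_t-\tilde{\bm\theta}_*\|_{\tilde{\bm M}_t}\le\tilde\alpha_t$ immediately yields the second summand $(1-\lambda)\tilde\alpha_t\|\bm x_{a,t}^T\bm M_t^{-1}\|_{\tilde{\bm M}_t^{-1}}$.

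The substantive part is the noise term $\lambda|\bm x_{a,t}^T\bm M_t^{-1}\sum_{\tau=1}^{t-1}\bm x_{a_\tau,\tau}\epsilon_\tau|$. Cauchy--Schwarz in the $\bm M_t^{-1}$ inner product bounds it by $\lambda\,\|\bm x_{a,t}\|_{\bm M_t^{-1}}\cdot\big\|\sum_{\tau=1}^{t-1}\bm x_{a_\tau,\tau}\epsilon_\tau\big\|_{\bm M_t^{-1}}$, so everything reduces to a self‑normalized tail bound for the vector‑valued martingale $\sum_{\tau=1}^{t-1}\bm x_{a_\tau,\tau}\epsilon_\tau$. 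Here I would invoke the self‑normalized concentration inequality that \texttt{LinUCB} is built on~\cite{abbasi2011improved}: since $\epsilon_\tau$ is conditionally $1$‑sub‑Gaussian w.r.t.\ the $\sigma$‑field generated by the past arms, rewards and conversational feedbacks, and $\bm M_t=\lambda\sum_{\tau=1}^{t-1}\bm x_{a_\tau,\tau}\bm x_{a_\tau,\tau}^T+(1-\lambda)\bm I$ has $(1-\lambda)\bm I$ as its regularizer, it gives, uniformly over $t$ and with probability at least $1-\sigma$, a bound $\big\|\sum_{\tau=1}^{t-1}\bm x_{a_\tau,\tau}\epsilon_\tau\big\|_{\bm M_t^{-1}}^2\le \tfrac{2}{\lambda}\log\!\big(\det(\bm M_t)^{1/2}\det((1-\lambda)\bm I)^{-1/2}/\sigma\big)$. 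Using $\|\bm x_{a,t}\|_2=1$, so that $\tr(\bm M_t)=(1-\lambda)d+\lambda(t-1)$, the AM--GM inequality on the eigenvalues gives $\det(\bm M_t)\le\big((1-\lambda)+\lambda(t-1)/d\big)^d$; dividing by $\det((1-\lambda)\bm I)=(1-\lambda)^d$ produces the factor $\big(1+\tfrac{\lambda t}{(1-\lambda)d}\big)^{d/2}$ and hence exactly the $\alpha_t=\sqrt{d\log((1+\frac{\lambda t}{(1-\lambda)d})/\sigma)}$ of the statement. Tracking the powers of $\lambda$ through the Cauchy--Schwarz step then delivers the first summand in the claimed form, and adding the two pieces finishes the proof.

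The main obstacle is this self‑normalized step. Because $a_\tau$ (hence $\bm x_{a_\tau,\tau}$) is selected adaptively from the history, $\sum_\tau\bm x_{a_\tau,\tau}\epsilon_\tau$ is a genuine martingale rather than a sum of independent increments, so one must use the martingale ``method of mixtures''/stopping‑time argument of~\cite{abbasi2011improved} to make the bound hold simultaneously for all $t$ — a naive union bound over $t$ does not suffice. A secondary nuisance is bookkeeping the constants $\lambda$ and $1-\lambda$ consistently through the weighted‑regression normal equations, the determinant--trace estimate, and the sub‑Gaussian scale, so that the resulting width is precisely the $C_{a,t}$ of Eq.~(\ref{equ:arm_selection}). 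Note that the assumed conditional sub‑Gaussianity of the conversational noise $\tilde\epsilon_{k,t}$ is not used in this lemma; it is listed only because the hypothesis $\|\tilde{\bm\theta}_t-\tilde{\bm\theta}_*\|_{\tilde{\bm M}_t}\le\tilde\alpha_t$ will itself be established, with an explicit $\tilde\alpha_t$, in the key‑term‑selection analysis of Section~\ref{sec:key-term-selection}.
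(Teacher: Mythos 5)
Your proposal is correct and follows essentially the same route as the paper's own proof: the identical decomposition $\bm{\theta}_t-\bm{\theta}_* = \lambda\bm{M}_t^{-1}\sum_{\tau=1}^{t-1}\bm{x}_{a_\tau,\tau}\epsilon_\tau + (1-\lambda)\bm{M}_t^{-1}(\tilde{\bm{\theta}}_t-\tilde{\bm{\theta}}_*)$ (with $\tilde{\bm{\theta}}_*$ identified with $\bm{\theta}_*$), the same Cauchy--Schwarz split into a noise term and a key-term term, and the same appeal to the self-normalized martingale bound of Abbasi-Yadkori et al.\ combined with the determinant--trace estimate. The only divergence is constant bookkeeping: since $\bm{M}_t=\lambda\bigl(\sum_{\tau}\bm{x}_{a_\tau,\tau}\bm{x}_{a_\tau,\tau}^T+\frac{1-\lambda}{\lambda}\bm{I}\bigr)$, your rescaled bound with the factor $\frac{2}{\lambda}$ is the rigorous one and actually yields a first summand $\sqrt{\lambda}\,\alpha_t\|\bm{x}_{a,t}\|_{\bm{M}_t^{-1}}$ rather than $\lambda\alpha_t\|\bm{x}_{a,t}\|_{\bm{M}_t^{-1}}$, whereas the paper gets the smaller factor $\lambda$ only by applying the self-normalized inequality to $\|\cdot\|_{\bm{M}_t^{-1}}$ as if $\bm{M}_t$ were an unscaled regularized Gram matrix --- so on this point your accounting is tighter than the paper's, and it is consistent with the $\sqrt{\lambda}$ coefficient that ultimately appears in Theorem~\ref{thm:regret_bound_fixed_scenario}.
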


\subsubsection{\textbf{Key-Term Selection}}
\label{sec:key-term-selection}

Next we describe how the ConUCB algorithm selects key-terms.
Let $\boldsymbol{X}_t \in \mathbb{R}^{|\mathcal{A}_t| \times d}$
 denote the collection of contextual vectors of arms presented at round $t$,  i.e,
 $\boldsymbol{X}_t(a,:)=\boldsymbol{x}_{a,t}^T, \forall a \in \mathcal{A}_t$.
Given the current context
$X_t$ and  interactions on both arms and key-terms up to round $t$, 
to minimize the cumulative regret, ConUCB needs to select a key-term so that $\bm{\theta}_t$ can be learned as accurately as possible,
since no regret would be induced if $\bm{\theta}_t$ is equal to the unknown true arm-level parameter vector $\bm{\theta}_*$.
Thus, a natural idea is to select the key-term that minimizes the estimation error
 $\mathbb{E} [ \|\boldsymbol{X}_t\boldsymbol{\theta}_t
 -\boldsymbol{X}_t\boldsymbol{\theta}_*\|_2^2 ]$.
 As suggested by Theorem~\ref{thm:key-term-selection}, this means to select key-terms according to Eq.~(\ref{equ:optimized_strategy}).

 \begin{thm}
    Given the current context $X_t$ and interactions at both arm-level and key-term level up to
    round $t$, to minimize $\mathbb{E} [ \|\boldsymbol{X}_t\boldsymbol{\theta}_t
 -\boldsymbol{X}_t\boldsymbol{\theta}_*\|_2^2 ]$, one only needs to select the key-term $k$ to minimize
\begin{align}
 \tr\left(
     \boldsymbol{X}_t\boldsymbol{M}_t^{-1}(\boldsymbol{\tilde{M}}_{t-1}+\boldsymbol{\tilde{x}}_{k,t}\boldsymbol{\tilde{x}}_{k,t}^T)^{-1}\boldsymbol{M}_t^{-1}\boldsymbol{X}_t^T
    \right). \nonumber
\end{align} 
 In other words, it selects the key-term $k$ as follows:
 \begin{align}
  k&= \arg\max_{k'}  \quad 
  \| \boldsymbol{X}_t\boldsymbol{M}_t^{-1}\boldsymbol{\tilde{M}}_{t-1}^{-1}\boldsymbol{\tilde{x}}_{k',t}\|_2^2 \big/
  \left(1+\boldsymbol{\tilde{x}}_{k',t}^T\boldsymbol{\tilde{M}}_{t-1}^{-1}\boldsymbol{\tilde{x}}_{k',t} \right).
  \label{equ:optimized_strategy}
 \end{align}
 where $\boldsymbol{\tilde{x}}_{k,t}=\sum_{a \in \mathcal{A}} \frac{w_{a,k}} 
 {\sum_{a' \in \mathcal{A}} w_{a',k}}\boldsymbol{x}_{a,t}$.
 \label{thm:key-term-selection}
\end{thm}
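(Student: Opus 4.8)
The plan is to compute the objective $\mathbb{E}\bigl[\|\boldsymbol{X}_t\boldsymbol{\theta}_t-\boldsymbol{X}_t\boldsymbol{\theta}_*\|_2^2\bigr]$ in closed form, isolate the only part of it that is affected by the key-term queried at round $t$, and reduce that part to the stated trace via the Sherman--Morrison identity. First I would substitute the closed forms of Eq.~(\ref{equ:closed_solution}) together with the reward model $r_{a_\tau,\tau}=\boldsymbol{x}_{a_\tau,\tau}^T\boldsymbol{\theta}_*+\epsilon_\tau$ and the conversational-feedback model $\tilde{r}_{k,\tau}=\boldsymbol{\tilde{x}}_{k,\tau}^T\boldsymbol{\theta}_*+\tilde\epsilon_{k,\tau}$ (the latter read off from Eq.~(\ref{con_feedback_model})). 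Using $\boldsymbol{b}_t=(\boldsymbol{M}_t-(1-\lambda)\boldsymbol{I})\boldsymbol{\theta}_*+\lambda\sum_{\tau<t}\boldsymbol{x}_{a_\tau,\tau}\epsilon_\tau$ and $\boldsymbol{\tilde{b}}_t=(\boldsymbol{\tilde{M}}_t-\tilde\lambda\boldsymbol{I})\boldsymbol{\theta}_*+\sum_{\tau,k}\boldsymbol{\tilde{x}}_{k,\tau}\tilde\epsilon_{k,\tau}$, the estimation error reduces to
\[
\boldsymbol{\theta}_t-\boldsymbol{\theta}_*=\lambda\,\boldsymbol{M}_t^{-1}\!\sum\nolimits_{\tau<t}\boldsymbol{x}_{a_\tau,\tau}\epsilon_\tau+(1-\lambda)\,\boldsymbol{M}_t^{-1}\bigl(\boldsymbol{\tilde{M}}_t^{-1}\!\sum\nolimits_{\tau,k}\boldsymbol{\tilde{x}}_{k,\tau}\tilde\epsilon_{k,\tau}-\tilde\lambda\,\boldsymbol{\tilde{M}}_t^{-1}\boldsymbol{\theta}_*\bigr).
\]

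Next I would take the expectation over the noise, conditioning on the interaction history up to round $t$ so that $\boldsymbol{X}_t$, $\boldsymbol{M}_t$, $\boldsymbol{\tilde{M}}_{t-1}$ and all past contexts are treated as fixed. Since $\epsilon_\tau$ and $\tilde\epsilon_{k,\tau}$ are zero-mean and independent of one another, every cross term drops and $\mathbb{E}\|\boldsymbol{X}_t(\boldsymbol{\theta}_t-\boldsymbol{\theta}_*)\|_2^2$ splits into an arm-level variance term $\lambda^2\mathbb{E}\|\boldsymbol{X}_t\boldsymbol{M}_t^{-1}\sum_{\tau<t}\boldsymbol{x}_{a_\tau,\tau}\epsilon_\tau\|_2^2$, which contains no key-term quantity and is thus irrelevant to the choice of key-term, plus $(1-\lambda)^2\mathbb{E}\|\boldsymbol{X}_t\boldsymbol{M}_t^{-1}(\boldsymbol{\tilde{\theta}}_t-\boldsymbol{\theta}_*)\|_2^2$. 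Expanding the second term, again killing the bias--noise cross term because $\tilde\epsilon$ is zero-mean, and using $\mathbb{E}\bigl[(\sum_{\tau,k}\boldsymbol{\tilde{x}}_{k,\tau}\tilde\epsilon_{k,\tau})(\sum_{\tau,k}\boldsymbol{\tilde{x}}_{k,\tau}\tilde\epsilon_{k,\tau})^T\bigr]=\sum_{\tau,k}\boldsymbol{\tilde{x}}_{k,\tau}\boldsymbol{\tilde{x}}_{k,\tau}^T=\boldsymbol{\tilde{M}}_t-\tilde\lambda\boldsymbol{I}$ (treating the conversational noise as identically distributed, up to a common variance constant), I obtain $\tr\bigl(\boldsymbol{X}_t\boldsymbol{M}_t^{-1}\boldsymbol{\tilde{M}}_t^{-1}(\boldsymbol{\tilde{M}}_t-\tilde\lambda\boldsymbol{I})\boldsymbol{\tilde{M}}_t^{-1}\boldsymbol{M}_t^{-1}\boldsymbol{X}_t^T\bigr)+\tilde\lambda^2\|\boldsymbol{X}_t\boldsymbol{M}_t^{-1}\boldsymbol{\tilde{M}}_t^{-1}\boldsymbol{\theta}_*\|_2^2$. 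Dropping the pieces of order $\tilde\lambda$ leaves $\tr(\boldsymbol{X}_t\boldsymbol{M}_t^{-1}\boldsymbol{\tilde{M}}_t^{-1}\boldsymbol{M}_t^{-1}\boldsymbol{X}_t^T)$, and substituting the rank-one update $\boldsymbol{\tilde{M}}_t=\boldsymbol{\tilde{M}}_{t-1}+\boldsymbol{\tilde{x}}_{k,t}\boldsymbol{\tilde{x}}_{k,t}^T$ caused by querying key-term $k$ next (applied greedily, with the running key-term matrix in place of $\boldsymbol{\tilde{M}}_{t-1}$ when several key-terms are queried in the same round) produces exactly the first objective displayed in the theorem.

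Finally I would apply Sherman--Morrison, $(\boldsymbol{\tilde{M}}_{t-1}+\boldsymbol{\tilde{x}}_{k,t}\boldsymbol{\tilde{x}}_{k,t}^T)^{-1}=\boldsymbol{\tilde{M}}_{t-1}^{-1}-\boldsymbol{\tilde{M}}_{t-1}^{-1}\boldsymbol{\tilde{x}}_{k,t}\boldsymbol{\tilde{x}}_{k,t}^T\boldsymbol{\tilde{M}}_{t-1}^{-1}/(1+\boldsymbol{\tilde{x}}_{k,t}^T\boldsymbol{\tilde{M}}_{t-1}^{-1}\boldsymbol{\tilde{x}}_{k,t})$, so that the trace becomes the $k$-independent quantity $\tr(\boldsymbol{X}_t\boldsymbol{M}_t^{-1}\boldsymbol{\tilde{M}}_{t-1}^{-1}\boldsymbol{M}_t^{-1}\boldsymbol{X}_t^T)$ minus $\tr(\boldsymbol{X}_t\boldsymbol{M}_t^{-1}\boldsymbol{\tilde{M}}_{t-1}^{-1}\boldsymbol{\tilde{x}}_{k,t}\boldsymbol{\tilde{x}}_{k,t}^T\boldsymbol{\tilde{M}}_{t-1}^{-1}\boldsymbol{M}_t^{-1}\boldsymbol{X}_t^T)/(1+\boldsymbol{\tilde{x}}_{k,t}^T\boldsymbol{\tilde{M}}_{t-1}^{-1}\boldsymbol{\tilde{x}}_{k,t})$. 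Writing the subtracted numerator as $\tr(\boldsymbol{v}\boldsymbol{v}^T)=\|\boldsymbol{v}\|_2^2$ with $\boldsymbol{v}=\boldsymbol{X}_t\boldsymbol{M}_t^{-1}\boldsymbol{\tilde{M}}_{t-1}^{-1}\boldsymbol{\tilde{x}}_{k,t}$ shows that minimizing the trace over $k$ is the same as maximizing $\|\boldsymbol{X}_t\boldsymbol{M}_t^{-1}\boldsymbol{\tilde{M}}_{t-1}^{-1}\boldsymbol{\tilde{x}}_{k,t}\|_2^2\big/(1+\boldsymbol{\tilde{x}}_{k,t}^T\boldsymbol{\tilde{M}}_{t-1}^{-1}\boldsymbol{\tilde{x}}_{k,t})$, which is Eq.~(\ref{equ:optimized_strategy}).

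\textbf{The main obstacle.} The two displayed algebraic manipulations are routine; the real content, and the one step where the statement is an approximation rather than an identity, is the passage from the exact mean-squared error to the trace objective: one has to argue that the regularization-induced terms of order $\tilde\lambda$ — in particular the one carrying the unknown $\boldsymbol{\theta}_*$ — can be discarded, e.g.\ by taking $\tilde\lambda$ small, which is precisely why the theorem reads ``one only needs to'' minimize that trace. A second, minor subtlety is that $\boldsymbol{M}_t$ and the chosen contexts $\boldsymbol{x}_{a_\tau,\tau}$ are themselves noise-dependent through the arm-selection history; this is why the expectation must be taken conditionally on that history, after which only the fresh additive noise is averaged.
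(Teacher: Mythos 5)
Your overall route matches the paper's skeleton: you derive the same error decomposition $\boldsymbol{\theta}_t-\boldsymbol{\theta}_*=(1-\lambda)\boldsymbol{M}_t^{-1}(\boldsymbol{\tilde{\theta}}_t-\boldsymbol{\tilde{\theta}}_*)+\lambda\boldsymbol{M}_t^{-1}\sum_{\tau<t}\boldsymbol{x}_{a_\tau,\tau}\epsilon_\tau$, observe that the arm-noise part is unaffected by the key-term choice (the paper states exactly this as part (1) of its auxiliary Theorem~\ref{thm:optimal_ub}), and finish with the same Sherman--Morrison/Woodbury reduction to Eq.~(\ref{equ:optimized_strategy}). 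Where you genuinely diverge is the middle step. You compute $\mathbb{E}\|\boldsymbol{X}_t\boldsymbol{M}_t^{-1}(\boldsymbol{\tilde{\theta}}_t-\boldsymbol{\tilde{\theta}}_*)\|_2^2$ exactly, obtaining $\tr\bigl(\boldsymbol{X}_t\boldsymbol{M}_t^{-1}\boldsymbol{\tilde{M}}_t^{-1}(\boldsymbol{\tilde{M}}_t-\tilde{\lambda}\boldsymbol{I})\boldsymbol{\tilde{M}}_t^{-1}\boldsymbol{M}_t^{-1}\boldsymbol{X}_t^T\bigr)+\tilde{\lambda}^2\|\boldsymbol{X}_t\boldsymbol{M}_t^{-1}\boldsymbol{\tilde{M}}_t^{-1}\boldsymbol{\tilde{\theta}}_*\|_2^2$, and then discard the $O(\tilde{\lambda})$ and $O(\tilde{\lambda}^2)$ pieces by ``taking $\tilde{\lambda}$ small.'' The paper instead never drops anything: it upper-bounds the bias term by $\tilde{\lambda}\|\boldsymbol{X}_t\boldsymbol{M}_t^{-1}\boldsymbol{\tilde{M}}_t^{-1}\boldsymbol{\tilde{\theta}}_*\|_2^2\leq\|\boldsymbol{\tilde{\theta}}_*\|_2^2\,\tr(\boldsymbol{X}_t\boldsymbol{M}_t^{-1}\boldsymbol{\tilde{M}}_t^{-1}\boldsymbol{M}_t^{-1}\boldsymbol{X}_t^T)$ and the noise term by the same trace using $\mathbb{E}[\boldsymbol{\tilde{\epsilon}}_t\boldsymbol{\tilde{\epsilon}}_t^T]\preceq\boldsymbol{I}$ and $\boldsymbol{\tilde{X}}_t\boldsymbol{\tilde{X}}_t^T\preceq\boldsymbol{\tilde{M}}_t$, so the whole MSE is dominated by $(\|\boldsymbol{\theta}_*\|_2^2+1)\tr(\boldsymbol{X}_t\boldsymbol{M}_t^{-1}\boldsymbol{\tilde{M}}_t^{-1}\boldsymbol{M}_t^{-1}\boldsymbol{X}_t^T)$ with a $k$-independent constant, and minimizing the trace minimizes this bound for \emph{every} value of $\tilde{\lambda}$.

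This difference matters for the weak point you yourself flag. The terms you drop are $k$-dependent (both the $-\tilde{\lambda}\,\tr(\boldsymbol{X}_t\boldsymbol{M}_t^{-1}\boldsymbol{\tilde{M}}_t^{-2}\boldsymbol{M}_t^{-1}\boldsymbol{X}_t^T)$ correction and the bias term involve $\boldsymbol{\tilde{M}}_t=\boldsymbol{\tilde{M}}_{t-1}+\boldsymbol{\tilde{x}}_{k,t}\boldsymbol{\tilde{x}}_{k,t}^T$), so discarding them changes the objective being minimized, and the justification ``take $\tilde{\lambda}$ small'' is incompatible with the regime in which ConUCB is actually analyzed: Theorem~\ref{thm:regret_bound_fixed_scenario} assumes $\tilde{\lambda}\geq\frac{2(1-\lambda)}{\lambda(1-\sqrt{\lambda})^2}$, which is bounded well away from zero (it exceeds $20$ for all admissible $\lambda\in[0,0.5]$). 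So if you want your argument to support the theorem as it is used in the paper, replace the ``drop small terms'' step by the paper's bounding step (or, equivalently, absorb the bias into the trace with the constant $\|\boldsymbol{\tilde{\theta}}_*\|_2^2$); the Sherman--Morrison finish you give is then identical to the paper's and correct. Your remark about conditioning on the interaction history so that $\boldsymbol{M}_t$, $\boldsymbol{X}_t$ are fixed is a point the paper glosses over, and is a fair addition.
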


We can observe from Eq.~(\ref{equ:optimized_strategy}) that the key-term selection is depended on both the arms and key-terms selected in the previous rounds, i.e., $\bm{M}_t$ and $\bm{\tilde{M}}_t$. 
Moreover, the essence of Theorem~\ref{thm:key-term-selection} is to select key-terms to minimize $ \tr\left(
     \boldsymbol{X}_t\boldsymbol{M}_t^{-1}\boldsymbol{\tilde{M}}_{t}^{-1}\boldsymbol{M}_t^{-1}\boldsymbol{X}_t^T
    \right)$,
which can make the last term in the arm selection strategy, i.e.,  $\tilde{\alpha}_t\|\boldsymbol{x}_{a,t}^T\boldsymbol{M}_t^{-1}\|_{\boldsymbol{\tilde{M}}_t^{-1}}
    $ in Eq.~(\ref{equ:arm_selection}), shrink faster.

  When selecting key-terms
according to  Eq.~(\ref{equ:optimized_strategy}), 
the agent can calculate $\tilde{\alpha}_t$ using 
Lemma~\ref{lem:fixed_theta_bound}.
Since $b(t) \leq t$,  $\tilde{\alpha}_t$ is at the order of $O(\sqrt{d+\log t})$, while $\alpha_t$ is at the order of $O(\sqrt{d \log t})$, implying $\tilde{\alpha}_t <\alpha_t $.
  
\begin{lem}
   In selection of key-terms according to Eq.~(\ref{equ:optimized_strategy}), with
   probability $1-\sigma$, the following inequality holds:
   
   \resizebox{0.95\columnwidth}{!}{
\begin{minipage}{\columnwidth}
      \begin{align}
\textstyle
        \|\boldsymbol{\tilde{\theta}}_t-\boldsymbol{\tilde{\theta}}_*\|_{\boldsymbol{\tilde{M}}_t} \leq   \tilde{\alpha}_t = \sqrt{2 \left (d\log6+\log(\frac{2b(t)}{\sigma}) \right)} +   2\sqrt{\tilde{\lambda}}\|\boldsymbol{\tilde{\theta}}_*\|_2. \nonumber
      \end{align}
    \end{minipage}}
      \label{lem:fixed_theta_bound}
    \end{lem}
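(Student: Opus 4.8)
The plan is to view $\tilde{\bm\theta}_t=\tilde{\bm M}_t^{-1}\tilde{\bm b}_t$ as the ridge-regression estimate built from the noisy conversational feedbacks and to control its error by a self-normalized martingale concentration inequality of the Abbasi-Yadkori type, adapted to the fact that the key-term vectors are chosen adaptively. Writing $\tilde{\bm x}_{k,\tau}=\sum_{a\in\mathcal A}\frac{w_{a,k}}{\sum_{a'\in\mathcal A}w_{a',k}}\bm x_{a,\tau}$, the feedback model~(\ref{con_feedback_model}) gives $\tilde r_{k,\tau}=\tilde{\bm x}_{k,\tau}^{T}\tilde{\bm\theta}_*+\tilde\epsilon_{k,\tau}$ with $\tilde\epsilon_{k,\tau}$ conditionally $1$-sub-Gaussian. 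Substituting this into the closed form of $\tilde{\bm\theta}_t$ and using $\sum_{\tau\le t}\sum_{k\in\mathcal K_\tau}\tilde{\bm x}_{k,\tau}\tilde{\bm x}_{k,\tau}^{T}=\tilde{\bm M}_t-\tilde\lambda\bm I$ yields the bias–variance decomposition $\tilde{\bm\theta}_t-\tilde{\bm\theta}_*=\tilde{\bm M}_t^{-1}\bm S_t-\tilde\lambda\,\tilde{\bm M}_t^{-1}\tilde{\bm\theta}_*$, where $\bm S_t:=\sum_{\tau\le t}\sum_{k\in\mathcal K_\tau}\tilde{\bm x}_{k,\tau}\tilde\epsilon_{k,\tau}$. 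Since $\tilde{\bm M}_t\succeq\tilde\lambda\bm I$, the deterministic bias term contributes at most $\sqrt{\tilde\lambda}\,\|\tilde{\bm\theta}_*\|_2$ in $\|\cdot\|_{\tilde{\bm M}_t}$, so the task reduces to bounding the stochastic term $\|\bm S_t\|_{\tilde{\bm M}_t^{-1}}$.

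For that term I would use three structural facts: (i) each $\tilde{\bm x}_{k,\tau}$ is a convex combination of the unit vectors $\bm x_{a,\tau}$, so $\|\tilde{\bm x}_{k,\tau}\|_2\le 1$; (ii) the key-term queried in a conversation is determined by~(\ref{equ:optimized_strategy}) from the contexts and the arm/key-term feedbacks strictly preceding that query, so, with respect to the filtration generated by those quantities (refined within a round when several key-terms are queried in sequence), each $\tilde{\bm x}_{k,\tau}$ is predictable and each $\tilde\epsilon_{k,\tau}$ is a conditionally $1$-sub-Gaussian martingale increment; (iii) the number of conversational observations accumulated in $\bm S_t$ and $\tilde{\bm M}_t$ is $\sum_{\tau\le t}|\mathcal K_\tau|=b(t)$. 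I would then combine the scalar self-normalized tail bound with a covering argument: fix a $\frac{1}{2}$-net of the Euclidean unit sphere, of cardinality at most $6^d$; along each net direction $\bm u$ the self-normalized process is controlled, uniformly in time, by a term of order $\log(b(t)/\sigma)$ (the accumulation $\sum_{\tau,k}(\bm u^{T}\tilde{\bm x}_{k,\tau})^{2}\le b(t)$ by (i)); union-bounding over the net (contributing the $d\log 6$), over the at most $b(t)$ rounds at which $\tilde{\bm\theta}$ is updated, and over the two signs, and passing from the net to the supremum by the standard doubling trick, gives a bound of the form $\|\bm S_t\|_{\tilde{\bm M}_t^{-1}}\le\sqrt{2\bigl(d\log 6+\log(2b(t)/\sigma)\bigr)}$ with probability $1-\sigma$, simultaneously for all $t$. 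The same doubling applied to $\bm S_t-\tilde\lambda\tilde{\bm\theta}_*$ is what upgrades the bias contribution to $2\sqrt{\tilde\lambda}\|\tilde{\bm\theta}_*\|_2$, and the two pieces together give the claimed $\tilde\alpha_t$.

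The main obstacle is the adaptivity bookkeeping rather than any single estimate. Because the vectors $\tilde{\bm x}_{k,\tau}$ are themselves random and generated by the data-dependent rule~(\ref{equ:optimized_strategy}), and because several key-terms may be queried within one round so their order matters, one must choose the filtration fine enough that every $\tilde{\bm x}_{k,\tau}$ is predictable and every $\tilde\epsilon_{k,\tau}$ is a conditionally sub-Gaussian increment before the self-normalized concentration can be invoked; the fact that $\tilde{\bm M}_{t-1}$, not $\tilde{\bm M}_t$, appears in~(\ref{equ:optimized_strategy}) is precisely what keeps the queried direction predictable. The second delicate point is securing the $d\log 6$ term that is \emph{independent} of $b(t)$: the plain determinant bound $\log\bigl(\det\tilde{\bm M}_t/\det(\tilde\lambda\bm I)\bigr)$ would contribute $d\log(1+b(t)/(d\tilde\lambda))$, which grows with $t$, whereas the covering-plus-union argument decouples the $d$ factor from the $\log b(t)$ factor. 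Combined with $b(t)\le t$, this gives $\tilde\alpha_t=O(\sqrt{d+\log t})$, which is what later yields $\tilde\alpha_t<\alpha_t$.
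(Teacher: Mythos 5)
Your proposal is correct and follows essentially the same route as the paper: the same bias–noise decomposition of the ridge estimate $\tilde{\bm{\theta}}_t$, a scalar (Azuma-type) tail bound per fixed direction, a $\tfrac{1}{2}$-net of $S^{d-1}$ of size $6^d$ with a union bound and the doubling trick (yielding the $d\log 6$ term and the factor $2$ on $\sqrt{\tilde{\lambda}}\|\tilde{\bm{\theta}}_*\|_2$), and a final union bound over the at most $b(t)$ updates of $\tilde{\bm{\theta}}$. Your remarks on why the covering argument is preferred to the determinant bound, and on the filtration/predictability bookkeeping, are consistent with (and in the latter case more careful than) the paper's argument.
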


 \subsection{{\bf Regret Upper Bound of ConUCB}} 
 \label{sec:regret_bound_conucb}

We can prove that the regret upper bound of ConUCB is as follows.
\begin{thm}
  Assume that $\lambda\in
  [0,0.5]$, $\tilde{\lambda} \geq
  \frac{2(1-\lambda)}{\lambda(1-\sqrt{\lambda})^2}$, with the key-term selection strategy defined
    in Eq.~(\ref{equ:optimized_strategy}), then with probability $1-\sigma$, ConUCB  has the following regret upper bound.

     \resizebox{0.95\columnwidth}{!}{
\begin{minipage}{\columnwidth}
  \begin{align*}
\textstyle
      R(T)
      &  \textstyle \leq
      2\left(  \sqrt{\lambda} \sqrt{d\log\left( (1+\frac{\lambda T}{(1-\lambda)d})/\sigma \right)}
      + 2\sqrt{\frac{1-\lambda}{\lambda}}\|\boldsymbol{\tilde{\theta}}_*\|_2
      \right.
      \\
\textstyle
      & \textstyle
      \left.
      +  (1-\sqrt{\lambda})\sqrt{ d\log  6+\log(\frac{2b(t)}{\sigma}) }
      \right)
      \sqrt{Td \log(1+\frac{\lambda T}{d(1-\lambda)})}.
  \end{align*}
\end{minipage}}
 \label{thm:regret_bound_fixed_scenario}
\end{thm}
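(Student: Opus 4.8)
The plan is to run the classical optimism-based regret decomposition, now carrying the additional key-term confidence term. First I would condition on the event (of probability at least $1-\sigma$, obtained by a union bound over the two $1-\sigma$ events in Lemma~\ref{lem:conf_int} and Lemma~\ref{lem:fixed_theta_bound}) on which, for every round $t$ and every arm $a$, $|\bm{x}_{a,t}^T(\bm{\theta}_t-\bm{\theta}_*)|\le C_{a,t}$ with $C_{a,t}$ as in Eq.~(\ref{equ:arm_selection}) and $\tilde{\alpha}_t$ bounded as in Lemma~\ref{lem:fixed_theta_bound}. Since $a_t$ maximizes $\bm{x}_{a,t}^T\bm{\theta}_t+C_{a,t}$ over $\mathcal{A}_t$ while $a^*_t$ is optimal for $\bm{\theta}_*$, the standard two-line optimism argument ($\bm{x}_{a^*_t,t}^T\bm{\theta}_*\le \bm{x}_{a^*_t,t}^T\bm{\theta}_t+C_{a^*_t,t}\le \bm{x}_{a_t,t}^T\bm{\theta}_t+C_{a_t,t}\le \bm{x}_{a_t,t}^T\bm{\theta}_*+2C_{a_t,t}$) gives the per-round bound $\mathbb{E}[r_{a^*_t,t}]-\mathbb{E}[r_{a_t,t}]\le 2C_{a_t,t}$, hence $R(T)\le 2\sum_{t=1}^T C_{a_t,t}$.

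The first substantive step is to collapse the composite width $C_{a_t,t}$ into a scalar multiple of the familiar quantity $\|\bm{x}_{a_t,t}\|_{\bm{M}_t^{-1}}$. For the second term I would use $\tilde{\bm{M}}_t\succeq\tilde{\lambda}\bm{I}$ and $\bm{M}_t\succeq(1-\lambda)\bm{I}$ to write $\|\bm{x}_{a_t,t}^T\bm{M}_t^{-1}\|_{\tilde{\bm{M}}_t^{-1}}\le \tilde{\lambda}^{-1/2}\|\bm{M}_t^{-1}\bm{x}_{a_t,t}\|_2\le \big(\tilde{\lambda}(1-\lambda)\big)^{-1/2}\|\bm{x}_{a_t,t}\|_{\bm{M}_t^{-1}}$, so that $C_{a_t,t}\le \big(\lambda\alpha_t+\sqrt{(1-\lambda)/\tilde{\lambda}}\,\tilde{\alpha}_t\big)\|\bm{x}_{a_t,t}\|_{\bm{M}_t^{-1}}$. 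Then I substitute the explicit $\tilde{\alpha}_t$ from Lemma~\ref{lem:fixed_theta_bound}: the summand $2\sqrt{\tilde{\lambda}}\|\tilde{\bm{\theta}}_*\|_2$ loses its $\sqrt{\tilde{\lambda}}$ against the $\tilde{\lambda}^{-1/2}$ factor, and the hypothesis $\tilde{\lambda}\ge 2(1-\lambda)/(\lambda(1-\sqrt{\lambda})^2)$ is exactly what bounds $\sqrt{(1-\lambda)/\tilde{\lambda}}$ by $\tfrac{1}{\sqrt2}\sqrt{\lambda}\,(1-\sqrt{\lambda})$, turning the Gaussian-tail part of $\tilde{\alpha}_t$ into a clean coefficient of the form $(1-\sqrt{\lambda})\sqrt{d\log 6+\log(2b(T)/\sigma)}$. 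Using that $t\mapsto\alpha_t$ and $t\mapsto\tilde{\alpha}_t$ are nondecreasing (the latter because $b$ is nondecreasing), I pull the resulting coefficient out of the sum as its value at $t=T$.

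It then remains to bound $\sum_{t=1}^T\|\bm{x}_{a_t,t}\|_{\bm{M}_t^{-1}}$. By Cauchy--Schwarz this is at most $\sqrt{T\sum_{t=1}^T\|\bm{x}_{a_t,t}\|_{\bm{M}_t^{-1}}^2}$, and then the standard elliptical-potential / determinant-trace argument applies: writing $\bm{M}_t=\lambda\bm{V}_t$ with $\bm{V}_t=\tfrac{1-\lambda}{\lambda}\bm{I}+\sum_{\tau<t}\bm{x}_{a_\tau,\tau}\bm{x}_{a_\tau,\tau}^T$, we have $\|\bm{x}_{a_t,t}\|_{\bm{M}_t^{-1}}^2=\tfrac1\lambda\|\bm{x}_{a_t,t}\|_{\bm{V}_t^{-1}}^2$; the per-round terms $\|\bm{x}_{a_t,t}\|_{\bm{V}_t^{-1}}^2$ are $\le 1$ precisely because $\lambda\le\tfrac12$ forces $\bm{V}_t\succeq\bm{I}$ while $\|\bm{x}_{a_t,t}\|_2=1$; and telescoping $\log\det$ with $\tr\bm{V}_{T+1}\le \tfrac{d(1-\lambda)}{\lambda}+T$ gives $\sum_t\|\bm{x}_{a_t,t}\|_{\bm{V}_t^{-1}}^2\le d\log(1+\tfrac{\lambda T}{d(1-\lambda)})$, hence $\sum_t\|\bm{x}_{a_t,t}\|_{\bm{M}_t^{-1}}\le \sqrt{\tfrac{T}{\lambda}}\sqrt{d\log(1+\tfrac{\lambda T}{d(1-\lambda)})}$. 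Multiplying by the pulled-out coefficient and rearranging $\tfrac{1}{\sqrt\lambda}$ into the bracket produces the three-term prefactor $2\big(\sqrt{\lambda}\,\alpha_T+2\sqrt{(1-\lambda)/\lambda}\,\|\tilde{\bm{\theta}}_*\|_2+(1-\sqrt{\lambda})\sqrt{d\log 6+\log(2b(T)/\sigma)}\big)$ times $\sqrt{Td\log(1+\tfrac{\lambda T}{d(1-\lambda)})}$, which is the claimed bound once we recall $\alpha_T=\sqrt{d\log((1+\tfrac{\lambda T}{(1-\lambda)d})/\sigma)}$.

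The main obstacle I anticipate is the middle paragraph: tracking the constants so they line up requires the operator-norm reduction of the cross-term $\|\bm{x}^T\bm{M}_t^{-1}\|_{\tilde{\bm{M}}_t^{-1}}$ together with the precise use of the $\tilde{\lambda}$-hypothesis to both cancel the $\sqrt{\tilde{\lambda}}\|\tilde{\bm{\theta}}_*\|_2$ contribution and simplify the tail coefficient; by comparison the optimism step and the potential step are routine. A secondary point worth stating explicitly is why $\lambda\le\tfrac12$ is genuinely needed --- it is what makes $\bm{V}_t\succeq\bm{I}$, hence the per-round potential terms $\le 1$ --- and why $\tilde{\alpha}_t<\alpha_t$, namely $\tilde{\alpha}_t=O(\sqrt{d+\log t})$ against $\alpha_t=O(\sqrt{d\log t})$ since $b(t)\le t$, which is the mechanism by which the conversation-driven second term of $C_{a,t}$ shrinks faster and which this regret bound quantifies.
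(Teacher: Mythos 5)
Your proposal is correct and follows essentially the same route as the paper's proof: the optimism step giving $R(T)\le 2\sum_t C_{a_t,t}$, the reduction $\|\boldsymbol{x}_t^T\boldsymbol{M}_t^{-1}\|_{\boldsymbol{\tilde{M}}_t^{-1}}^2\le \frac{1}{\tilde{\lambda}(1-\lambda)}\|\boldsymbol{x}_t\|_{\boldsymbol{M}_t^{-1}}^2$, Cauchy--Schwarz plus the elliptical-potential bound (Lemma 11 of Abbasi-Yadkori et al.), and then plugging in $\alpha_T$, $\tilde{\alpha}_T$ with the $\tilde{\lambda}$-hypothesis to shape the constants. You in fact spell out the constant bookkeeping (and the role of $\lambda\le \tfrac12$) more explicitly than the paper does, which merely cites the auxiliary lemmas at the end.
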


Since $b(t) \leq t, \lambda \in [0,0.5]$, the regret upper bound of ConUCB in
Theorem~\ref{thm:regret_bound_fixed_scenario}  is at most
$\dot{O}((1-\sqrt{\lambda})\sqrt{d+\log T} + \sqrt{\lambda d\log T})$, where $\dot{O}(a)=O(a \cdot \sqrt{dT\log T})  $. 
It is smaller than
the regret upper bound of LinUCB~\cite{abbasi2011improved}, i.e., $\dot{O}((1-\sqrt{\lambda})\sqrt{d\log T} + \sqrt{\lambda d\log T})=\dot{O}(\sqrt{d \log T})$. 
Therefore, when $d$ and $\log T$ are large, which is usually the case in practice, we can improve substantially by reducing $\sqrt{d \log T}$ to $\sqrt{d+\log T}$ in the first term.


\section{Experiments on Synthetic Data}
\label{sec:exp_syn}
In this section, we describe experimental results 
on synthetic data.

\subsection{Experiment Setting}
\label{sub_sec:syn_setting}

\noindent
{\bf Synthetic data.}
We create a key-term set $\mathcal{K} \triangleq \{1, 2, \ldots, K\}$ with size
$K$ and an arm pool (set) $\mathcal{A} \triangleq \{a_1, a_2, \ldots, a_N\}$
with size $N$.
Each arm $a \in \mathcal{A}$ is associated with a $d$-dimensional feature vector $\bm{x}_a$,
and it is also related to a set of key-terms $\mathcal{Y}_a \subseteq \mathcal{K}$ 
with equal weight $1 / |\mathcal{Y}_a|$.  
We assume that the more shared key-terms two arms have, 
the closer their feature vectors will be.  
We generate an arm's feature vector as follows:
(1) we first generate a pseudo feature vector $\bm{\dot{x}}_k$
for each key-term $k \in \mathcal{K}$, 
where each dimension of $\bm{\dot{x}}_k$ is drawn 
independently from a uniform distribution $U(-1,1)$; 
(2) for each arm $a$, we sample $n_a$ key-terms uniformly at random  
from $\mathcal{K}$ without replacement as its related key-terms set $\mathcal{Y}_a$
with equal weight $1 / n_a$, where $n_a$ is a random number in $[1,M]$;
(3) finally, each dimension $i$ of $\bm{x}_a$ is 
independently drawn from 
$
N(\sum_{k \in \mathcal{Y}_a} \bm{\dot{x}}_k(i) / n_a, \sigma_g^2)
$, 
where $\sigma_g \in \mathbb{R}_+$.
We note that the information of key-terms
is contained in $\bm{x}_a$, and thus they are available for all algorithms. 
We then generate $N_u$ users,  
each of whom is associated with a $d$-dimensional vector $\bm{\theta}_u$, 
i.e., the ground-truth of user $u$'s preference.  
Each dimension of $\bm{\theta}_u$ is drawn from a uniform distribution U(-1,1).
We consider a general setting in which 
at each round $t$, the simulator only discloses a subset
of arms in $\mathcal{A}$, denoted as $\mathcal{A}_t$, to the agent for selection, 
for example, randomly selecting 50 arms from $\mathcal{A}$ without replacement.
The true arm-level reward $r_{a,t}$ as well as  key-term level reward $\tilde{r}_{k,t}$ are generated according to Eq.~(\ref{equ:reward}) and Eq.~(\ref{con_feedback_model}) respectively.  The noise $\epsilon_t$ is sampled from  Gaussian distribution
$\mathcal{N}(0,\sigma_g^2)$ only once for all arms
at round $t$, so is the noise $\tilde{\epsilon}_{k,t}$.
In the simulation, we set contextual vector $d=50$, user number $N_u=200$,
arm size $N=5000$, size of key-terms $K=500$,  $\sigma_g=0.1$.
We set $M=5$, which means that each arm is related to at most $5$ different key-terms, 
because the average values of $M$ in Yelp and Toutiao are 4.47 and 4.49 respectively.
Moreover, following similar procedures in paper~\cite{li2016collaborative}, we tune
the optimal parameters in each algorithm. 

\noindent
{\bf Baselines.}
We compare the proposed ConUCB algorithm with the following algorithms. 
\begin{itemize}
\item
  {\bf LinUCB}~\cite{li2010contextual}:  The state-of-art contextual bandit algorithm.
  LinUCB only works with arm-level feedback, 
  and it does not consider conversational feedback.
\item
  {\bf Arm-Con}:
 Christoper et. al.~\cite{christakopoulou2016towards} proposes to conduct
 conversation  by asking the user whether she likes an additional
 item selected by a bandit algorithm. Arm-Con adopts the conversation format and
 leverages LinUCB for arm selection.
\item
{\bf Var-RS}: A variant of ConUCB selecting a key-term \textit{randomly}.

\item
  {\bf Var-MRC}:   A variant of ConUCB that selects the key-term with
  the maximal related confidence under current context: 
 \[
    \textstyle \qquad \qquad  k=\arg\max_{k'} \sum_{a \in \mathcal{A}_t} \frac{w_{a,k'}}{\sum_{a' \in \mathcal{A}_t} 
     w_{a',k'}} \tilde{\alpha}_t
    \|\boldsymbol{x}_{a,t}^T\boldsymbol{M}_t^{-1}\|_{\boldsymbol{\tilde{M}}_t^{-1}},
    \vspace{-2mm}
 \]
 where $\tilde{\alpha}_t
    \|\boldsymbol{x}_{a,t}^T\boldsymbol{M}_t^{-1}\|_{\boldsymbol{\tilde{M}}_t^{-1}}$ is the part of confidence interval $C_{a,t}$ related to key-terms (Eq.~(\ref{equ:arm_selection})).

\item
{\bf Var-LCR}:  A variant of ConUCB  that selects the key-term with
   the largest confidence reduction under current context: 
 \[
    \textstyle \qquad k= \arg\max\limits_{k'} \sum_{a \in \mathcal{A}_t} \frac{w_{a,k'}}
{\sum_{a' \in \mathcal{A}_t} w_{a',k'}}\left(C_{a,t} -C_{a,t}^{k'} \right), 
\vspace{-2mm}
 \]
      where $C_{a,t}^{k'}$ is the new confidence interval of arm $a$ at round $t$, if we query key-term $k'$ next. 


\end{itemize}
Note that if we run LinUCB algorithm $T$ rounds,
then all other algorithms conduct $T$-round arm-level
interactions and $b(T)$ conversations.
Conversations are conducted at the same time for the algorithms. 

\subsection{Evaluation Results}
\label{sub_section:syn_regret}
We evaluate all algorithms in terms of cumulative regret defined in Eq. (\ref{equ:regret}).
We set the frequency function $b(t)=5\lfloor \log(t) \rfloor$.
At each round $t$, we randomly select 50
arms from $\mathcal{A}$ without replacement as $\mathcal{A}_t$.
The same $\mathcal{A}_t$ are presented to all 
algorithms.

\noindent
{\bf Cumulative regret comparison.} 
We run the experiments 10 times, and calculate the average cumulative regret for each algorithm. 
The results plotted in Figure~\ref{fig:syn_regret} show that LinUCB has the largest cumulative regret, indicating
that the use of conversational feedbacks,
either by querying items (Arm-Con) or by querying key-terms
(Var-RS, ConUCB, Var-MRC, Var-LCR)  
can improve the learning speed of bandit,
since they can learn 
parameters more accurately within the same number of iterations.
Moreover, 
 algorithms that query key-terms, i.e., Var-RS, ConUCB, Var-MRC, Var-LCR,
have much smaller cumulative regret than Arm-Con, 
which asks the user's preference on additional items.  
It is reasonable,
because feedback on a key-term should be more informative than feedback on an arm.
Finally,
the ConUCB algorithm has the smallest cumulative regret, demonstrating the effectiveness of its key-term selection strategy.

\noindent
{\bf Accuracy of learned parameters.}
We next compare the accuracy of the learned
parameters in different algorithms.
For each algorithm, we calculate the average difference between the learned parameter vectors of users and
the ground-truth parameter vectors of users, i.e.,  $\frac{1}{N_u} \sum_u\|\bm{\theta}_{u,t}
-\bm{\theta}_{u,*}\|_2$,
where  $\bm{\theta}_{u,t}$ and $\bm{\theta}_{u,*}$ represent the learned and the
ground-truth parameter vectors of user $u$ respectively.
A smaller $\frac{1}{N_u} \sum_u\|\bm{\theta}_{u,t} -\bm{\theta}_{u,*}\|_2$ implies
a higher accuracy in learning of parameter vector.
Figure~\ref{fig:theta_diff} shows the average difference
$\frac{1}{N_u} \sum_u\|\bm{\theta}_{u,t} -\bm{\theta}_{u,*}\|_2$ in \textit{every} 50 iterations.
One can observe that for all algorithms,
the values of $\frac{1}{N_u} \sum_u\|\bm{\theta}_{u,t} -\bm{\theta}_{u,*}\|_2$ decrease in $t$.  
That is, all algorithms can learn the parameters more accurately with more interactions with  users.
Moreover, ConUCB is the best algorithm in learning of the parameters.  

\noindent
{\bf Impact of conversation frequency $b(t)$.} 
Next we study the impact of conversation frequency.
In principle, key-term-level conversations should be less frequent than arm-level interactions, i.e., $b(t) \leq t$.
Thus we mainly consider two types of conversation frequency function $b(t)$:
(1) $b(t)=Q_l\lfloor \log(t) \rfloor$: ask $Q_l$ questions every time,
  while the span between two consecutive conversations gets larger and larger;
(2) $b(t)=Q_l \lfloor \frac{t}{Q_q} \rfloor$: ask $Q_l$ questions
per $Q_q$ iterations.
For the first type of $b(t)$, we vary the value of $Q_l$, 
and obtain cumulative regrets at round 1000 shown in Figure~\ref{fig:bt_1}.
For the second type of $b(t)$, we set $Q_q=50$, vary the value
of $Q_l$, and plot cumulative regrets at round 1000 in Figure~\ref{fig:bt_2}.
We also run the experiments with $Q_q=20$ and $Q_q=100$, and
the results are similar to that with $Q_q=50$.

Figure~\ref{fig:bt_2} and Figure~\ref{fig:bt_1} show that
asking more questions can help 
reduce the cumulative regrets more.  
For example, in Figure~\ref{fig:bt_1},
the cumulative regret is the largest when $b(t)=\lfloor \log(t) \rfloor$,
while the cumulative regret is the smallest when $b(t)=10 \lfloor \log(t) \rfloor$.
Similarly, in Figure~\ref{fig:bt_2}, the cumulative regret is the largest 
when $b(t)= \lfloor \frac{t}{50} \rfloor$, 
while it is the smallest when $b(t)= 10\lfloor \frac{t}{50} \rfloor$.

Comparing the cumulative regrets with $b(t)= 5\lfloor \log(t) \rfloor$ and $b(t)=5 \lfloor \frac{t}{50} \rfloor$, we can observe that although the agent asks more questions 
with $b(t)=5 \lfloor \frac{t}{50} \rfloor$,
its cumulative regret is much larger than that with $b(t)= 5\lfloor \log(t) \rfloor$.
The reason seems to be that with $b(t)= 5\lfloor \log(t) \rfloor$ the agent can ask more questions at the beginning, quickly  capture users' preference, and then gradually reduce the cumulative regret afterward.

\noindent
{\bf Impact of poolsize $|\mathcal{A}_t|$}.
We change the size of $\mathcal{A}_t$ from 25 to 500 while fixing all the parameters as described in Section~\ref{sub_sec:syn_setting}.
Figure~\ref{fig:poolsize} plots the cumulative regrets under different poolsizes.
One can observe that as the poolsize increases,
the cumulative regrets of all algorithms also increase, since it is more difficult  
to select the best arm when the poolsize becomes larger.
Again, we observe similar trends in different poolsizes:
(1) Using conversational feedbacks, 
either by querying additional arms (Arm-Con), 
or by querying key-terms (Var-RS, Var-MRC, Var-LCR, ConUCB) can reduce regret;
Querying key-terms is more effective and 
has smaller regrets than querying arms.
(2) The regret of ConUCB is the smallest.

\begin{figure}[t]
  \vspace{-5mm}
 \centering
 \subfloat[Effect of $b(t)=Q_l\lfloor \log(t) \rfloor$]
 { \includegraphics[width=.48\linewidth, height=.3\linewidth]{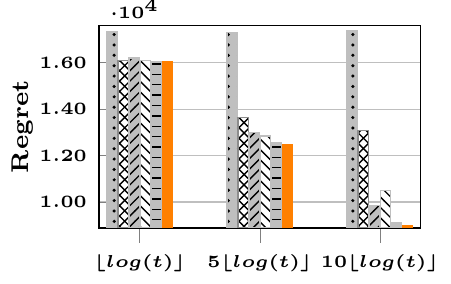}
   \label{fig:bt_1}} \hspace{-4mm}
 \subfloat[Effect of $b(t)=Q_l \lfloor \frac{t}{50} \rfloor$]
 { \includegraphics[width=.48\linewidth, height=.3\linewidth]{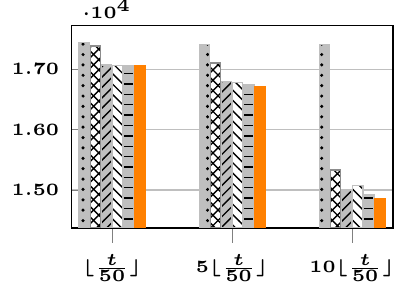}
   \label{fig:bt_2}}\\
 \vspace{-3.5mm}
\subfloat[Effect of poolsize]
 { \includegraphics[width=0.85\linewidth, height=.45\linewidth]{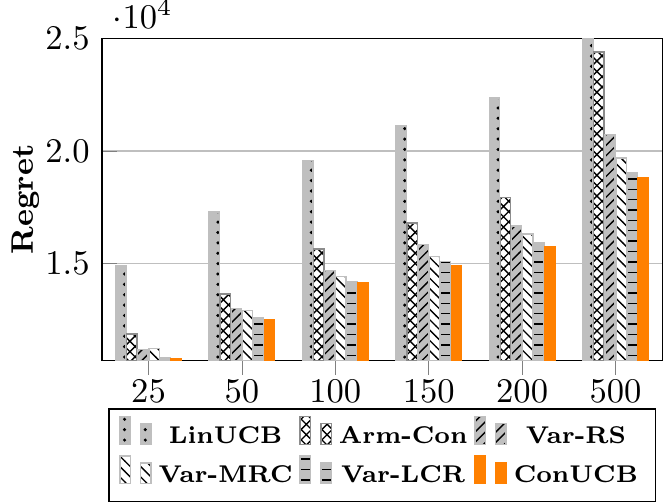}
   \label{fig:poolsize}}
 \vspace{-3.5mm}
\caption{Effect of various factors: Figures (a) \& (b) show
  the effect of $b(t)$; Figure (c) shows the effect of poolsize.}
  \end{figure}

\begin{figure*}[t]
  \centering
  \vspace{-5mm}
  \includegraphics[width=0.8\linewidth]{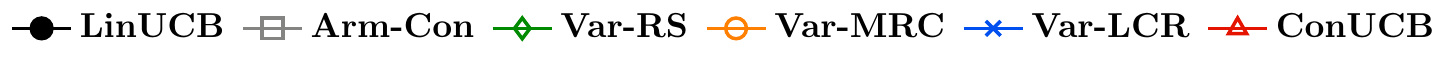}\\
  \vspace{-5mm}
  \subfloat[Cumulative regret on synthetic dataset]
           { \includegraphics[width=.25\linewidth, height=.26\linewidth]{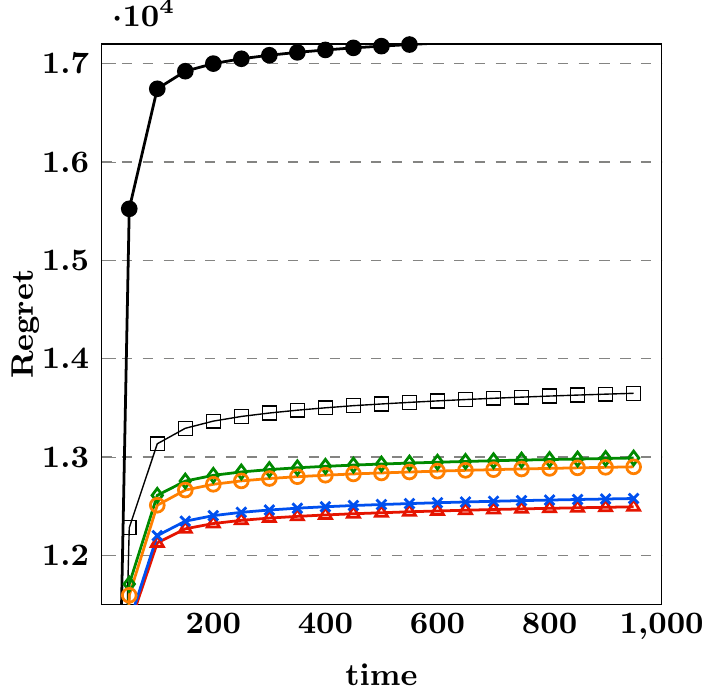}
             \label{fig:syn_regret}}
  \subfloat[Accuracy of learned parameters]
           { \includegraphics[width=.24\linewidth, height=.245\linewidth]{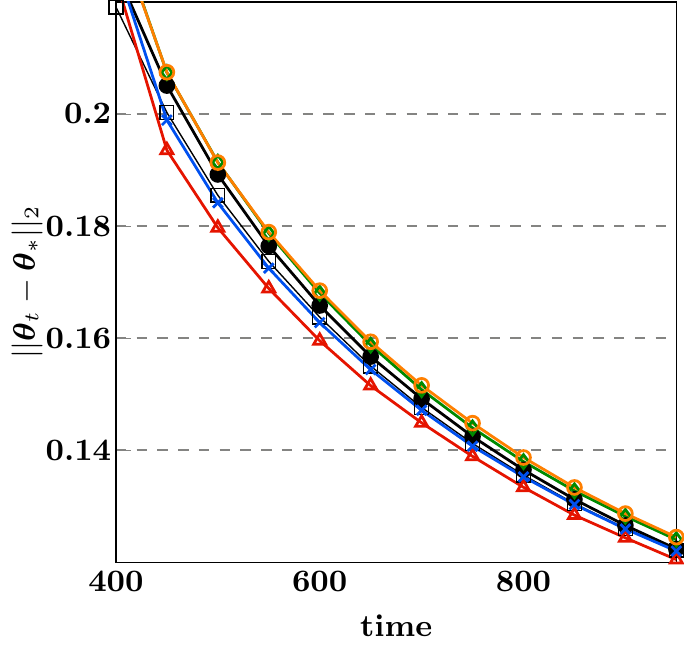}
             \label{fig:theta_diff}}
  \subfloat[Cumulative regret on Yelp dataset]
           { \includegraphics[width=.24\linewidth, height=.26\linewidth]{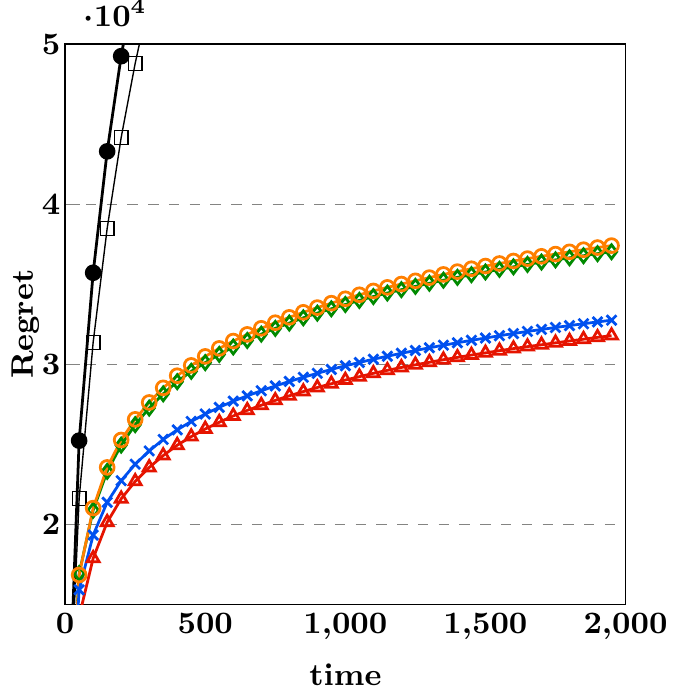}
             \label{fig:yelp_regret}}
  \subfloat[Normalized CTR on Toutiao dataset]
           { \includegraphics[width=.24\linewidth, height=.25\linewidth]{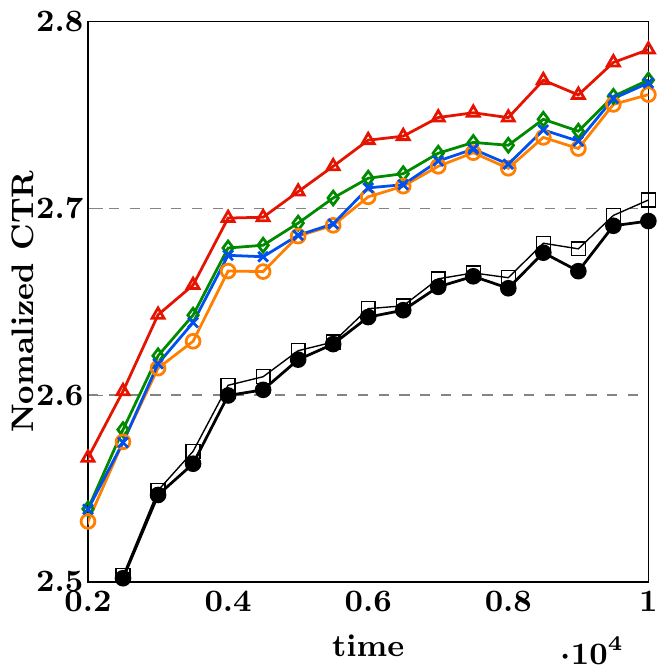}
             \label{fig:toutiao}}
\vspace{-3mm}
    \caption{Experimental results on all datasets.}
\vspace{-6mm}
\end{figure*}

\section{Experiments on Real Data}
\label{sec:real_data}
In this section, we describe empirical evaluations of the ConUCB algorithm
on two real-world datasets.

\subsection{Experiments on Yelp Dataset}
\label{sec:exp_yelp}
\noindent
{\bf Yelp dataset.}
The public Yelp dataset\footnote{http://www.yelp.com/academic\_dataset} contains
users' reviews of restaurants on Yelp.
We only keep users with no less than 100 reviews,
and restaurants with no less than 50 reviews.
The final Yelp dataset has 1,998 users, 25,160 restaurants, and 342,237 reviews.

\noindent
{\bf Experiment settings.} 
We take each restaurant as an arm.
Each restaurant in the dataset is associated with a number of categories. 
For example,  one restaurant named \textit{``Garage''} is associated with the
following categories: \{ \textit{``Mexican'', ``Burgers'', ''Gastropubs''}\}.
We take each category as a key-term.
Each arm is equally related to its associated key-terms.
There are in total 1,004 categories in the dataset, i.e. $|\mathcal{K}|=1004$.
We construct $50$-dimensional arms' contextual vectors via applying PCA on feature
vectors generated from restaurants' attributes, including:
(1) geographic features: 330 different cities;
(2) categorical features: 1,004 different categories;
(3) average rating and total review count;
(4) attributes: 34 different attributes, such as whether the restaurant serves alcohol or WIFI.
We also normalize all the contextual vectors, i.e., $\|\bm{x}_a\|_2=1, \forall a$.
The original 5-scale ratings are converted to a
binary-valued feedback between restaurants and users, 
i.e., high ratings (4 and 5) as positive(1) and low ratings ( $\leq 3$) as negative(0).
We derive users' true parameters based on ridge regression.  
We fix the size of $\mathcal{A}_t$ to 50.
During the simulation, the true arm-level reward $r_{a,t}$ as well as  key-term
level reward $\tilde{r}_{k,t}$ are generated according to Eq.~(\ref{equ:reward})
and Eq.~(\ref{con_feedback_model}) respectively.
Note that key-terms (i.e., categorical features) 
are part of arms' feature sets,  thus the information about key-terms 
is also available for all algorithms. 

\noindent
{\bf Evaluation results.}
We compare all six algorithms in terms of cumulative regret.
We adopt the frequency function $b(t)=5\lfloor log(t) \rfloor$.  
We run the experiments 10 times, and the average result for each algorithm
is shown in Figure~\ref{fig:yelp_regret}.
We observe similar results as those on the synthetic dataset.
That is, all algorithms that query key-terms, i.e., Var-RS, Var-MRC, Var-LCR, ConUCB,
have smaller cumulative regrets than LinUCB and Arm-Con.
Moreover, the ConUCB algorithm has the smallest cumulative regret, followed by Var-LCR, and then Var-MRC and Var-RS.

\subsection{Experiments on Toutiao Dataset}

\noindent
{\bf Toutiao dataset.}
This is a real-world news recommendation dataset, obtained from Toutiao\footnote{https://www.Toutiao.com/}, which is the largest news recommendation platform in China.
The dataset contains 2,000 users' interaction records in December 2017. There are
1,746,335 news articles and 8,462,597 interaction records.

\noindent
{\bf Experiment settings.}
We take each news article as an arm.
Each article is associated with several categories, such as,
\textit{``news\_car''} and \textit{ ``news\_sports''}.  
Each article is also associated with several keywords 
automatically extracted from the article.
We filter the keywords occurring less than 1,000 times. In total, there are $2,384$ keywords and 573 categories.  
We take them as key-terms $\mathcal{K}$.
We assume that an article is equally related to its associated key-terms.
To get the contextual vector of an arm, 
we first represent the article by a vector of 3,469 features (e.g., topic
distribution, categories, keywords, etc.).
We then use PCA to conduct dimension reduction,
and take the first 100
principal components as the contextual vectors, i.e., $d=100$.
We infer each user's feedback on an article through the user's reading behavior:
if the user reads the article, then the feedback is 1,
otherwise the feedback is 0.
The feedback is also the reward of the article.
We infer each user's feedback on key-terms by simulation.
Specifically, we pre-process the interacted articles
of the 2,000 users in November 2017 as above,
and employ ridge regression to infer users' true preference based on interaction records in the period.
Then we generate the ground-truth key-term-level feedbacks according to Eq.~(\ref{con_feedback_model}).
  Note that the information  about key-terms 
  is also available for LinUCB and Arm-Con algorithms,
  in the sense that key-terms (i.e.,
  categories, keywords) are in arms' feature sets. 

\noindent
{\bf Comparison method.}
The unbiased offline evaluation protocol proposed in ~\cite{li2011unbiased} 
is utilized to compare different algorithms.
The unbiased offline evaluation protocol only works 
when the feedback in the system is collected under a random policy.
Hence, we simulate the \textit{random} policy 
of the system by generating a candidate pool as follows.
At each round $t$, we store the article presented to the user ($a_t$)
and its received feedback $r_{a_t}$.  
Then we create $\mathcal{A}_t$ by including 
the served article along with 49 extra articles 
the user has interacted with (hence $|\mathcal{A}_t|=50, \forall t$).  
The 49 extra articles are drawn uniformly at random so that
for any article $a$ the user interacted with, 
if $a$ occurs in some set $\mathcal{A}_t$,
this article will be the one served by the system 1/50 of the times.  
The performance of algorithms is evaluated by 
Click Through-Rate (CTR),  the ratio 
between the number of clicks an algorithm receives 
and the number of recommendations it makes. 
Specifically, we use the average CTR in every 500 iterations (not the cumulative CTR) as the evaluation metric. 
Following~\cite{li2010contextual},
we normalize the resulting CTR from different algorithms
by the corresponding logged random strategy's CTR.

\noindent
{\bf Evaluation results. }
Figure~\ref{fig:toutiao} shows the normalized CTRs of different algorithms over
2000 users.
One can observe that
algorithms that querying key-terms can achieve
higher CTRs than LinUCB and Arm-Con.
Again ConUCB 
achieves the best performance.
Moreover, on the Toutiao dataset, 
Var-MRC and Var-LCR perform worse than
Var-RS.  
This is because they tend to select key-terms related to a large
group of arms repeatedly.  
One can also observe that Arm-Con only outperforms LinUCB slightly.  
This is because Toutiao dataset contains more negative feedbacks than
positive feedbacks, 
and some negative feedbacks are caused by that a user has read
something similar, rather than this user does not like the article.  
However, articles with such negative feedbacks may be queried by Arm-Con using
additional questions with larger probability, 
due to articles with similar contents received positive feedbacks 
from the same user.
This brings disturbance to the learning
of the user's parameter vector,
decreasing the efficiency of additional questions.

\section{Extension of Algorithm}
\label{sec:discussion}
ConUCB incorporates conversations into
  LinUCB.
  We demonstrate that 
  the technique is generic and can be applied to other contextual bandit
  algorithms as well.
  Specifically, 
we show how to extend
the hLinUCB algorithm~\cite{wang2016learning} with
conversational feedbacks.

\noindent
{\bf Conversational hLinUCB.}
The hLinUCB algorithm~\cite{wang2016learning} is one of the recent contextual bandit
algorithms, which utilizes a set of $l$-dimensional hidden features ($\bm{v}_a \in \mathbb{R}^l$) that affects the expected reward, 
in addition to the contextual features ($\bm{x}_{a,t} \in \mathbb{R}^d$).
 Formally, 
 \begin{align}
  \textstyle  r_{a,t}=(\bm{x}_{a,t}, \bm{v}_{a})^T \bm{\theta}_u +\eta_t,  
  \label{equ:hlinucb_reward}
  \vspace{-6mm}
 \end{align}
 where $\bm{\theta}_u \in \mathbb{R}^{l+d}$ is the parameter of user $u$, and $\eta_t$ is drawn from a zero-mean Gaussian distribution $\mathcal{N}(0,\gamma^2)$.

We then design the hConUCB algorithm to extend the hLinUCB algorithm to incorporate conversational feedbacks.
At round $t$, hConUCB first infers user $u$'s current key-term-level preference 
$\bm{\tilde{\theta}}_{u,t}$
solely based on her conversational feedbacks,
and then use $\bm{\tilde{\theta}}_{u,t}$ to guide the
learning of user $u$'s arm-level preference $\bm{\hat{\theta}}_{u,t}$,
with the estimated hidden features $\{\bm{\hat{v}}_{a,t}\}_{a \in \mathcal{A}}$ fixed. 
 Then, fixing $\bm{\tilde{\theta}}_{u,t}$ and $\bm{\hat{\theta}}_{u,t}$, 
 we use both conversational feedbacks and arm-level feedbacks
 to update the hidden features of arms $\{\bm{\hat{v}}_{a,t}\}_{a \in \mathcal{A}}$.  
One can also derive the confidence bound following a similar procedure as in
Lemma~\ref{lem:conf_int},  and choose the arm with the maximal upper confidence bound
value.
The key-term-selection strategy of hConUCB follows the 
procedure in
Theorem~\ref{thm:key-term-selection}. 
Moreover, we modify the baselines in Section~\ref{sub_sec:syn_setting} in a similar way.
For example, similar to Arm-Con, the hArm-Con algorithm conducts conversations by querying
 whether a user likes the item selected by hLinUCB.

\noindent
{\bf Experiment evaluations. }
 We then compare hConUCB with the modified baselines on the previous three datasets.
 The experimental settings on the three datasets are the same as those in  Section~\ref{sec:exp_syn}  and  in Section~\ref{sec:real_data}.
 The only difference is that  in this section,
 features are randomly partitioned into an observable part and a hidden part.
 We fix the dimensionaliy $l$ of hidden features to $5$, and set
 the dimensionality $d$ of observable features to $45$ on synthetic dataset and Yelp dataset,
and $d=95$ on Toutiao dataset.
 We set $\gamma=0.1$.
 Note that only observable features are showed to the algorithms. 
The arm-level feedback $r_{a,t}$ and the key-term-level feedback $\tilde{r}_{k,t}$ are generated similarly as in Section~\ref{sec:exp_syn}  and  in Section~\ref{sec:real_data}, except the arm-level reward model is in  Eq.~(\ref{equ:hlinucb_reward}).

 Figure \ref{fig:h2ucb_syn_regret} and \ref{fig:h2ucb_yelp_regret} show the cumulative regrets
 (defined in Eq.~(\ref{equ:regret})) on synthetic dataset and Yelp dataset
 respectively.
 One can observe that on both datasets,
 the algorithms that query key-terms except hVar-RS 
 outperform LinUCB, hLinUCB and hArm-Con.
 Moreover, hConUCB 
has the smallest cumulative regret.
The poor performance of hVar-RS
is because randomly selecting key-terms 
cannot effectively contribute to the inference of arms' hidden features.
 
 For the experiments on Toutiao dataset, we normalize the CTRs from
 different algorithms by the corresponding CTR of LinUCB.
 Figure~\ref{fig:h2ucb_toutiao} shows the normalized CTRs of different algorithms
 on Toutiao dataset.
 The values on the vertical axis in Figure~\ref{fig:h2ucb_toutiao} are all larger than $1$,
 indicating that all the algorithms outperform LinUCB.
 Also, we can observe that all algorithms that query key-terms have higher CTRs than hLinUCB
 and hArm-Con,
 and hConUCB achieves the highest CTR.
 Moreover, in this experiment, the performance of hArm-Con is
 worse than that of hLinUCB.
 The reason might be the same as that Arm-Con does not outperform
 LinUCB, as shown in Figure ~\ref{fig:toutiao}.
 In summary, the results demonstrate the effectiveness of  conversational mechanism incorporated into bandit algorithms.

\begin{figure}
   \vspace{-6mm}
   \centering
   \subfloat[synthetic dataset]
           { \includegraphics[width=.48\linewidth, height=.5\linewidth]{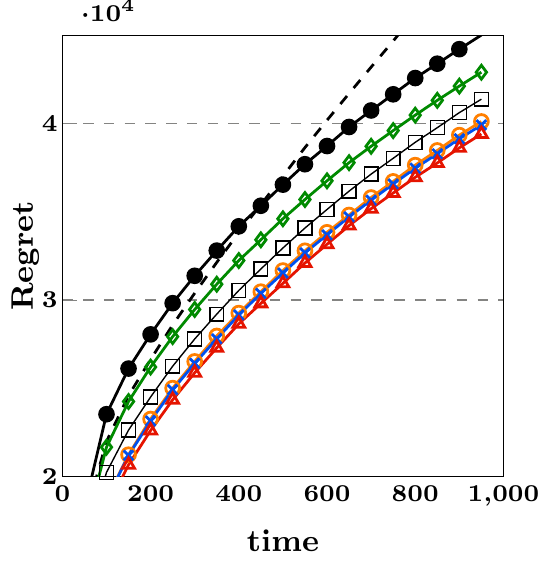}
             \label{fig:h2ucb_syn_regret}}
  \subfloat[Yelp dataset]
           { \includegraphics[width=.48\linewidth, height=.5\linewidth]{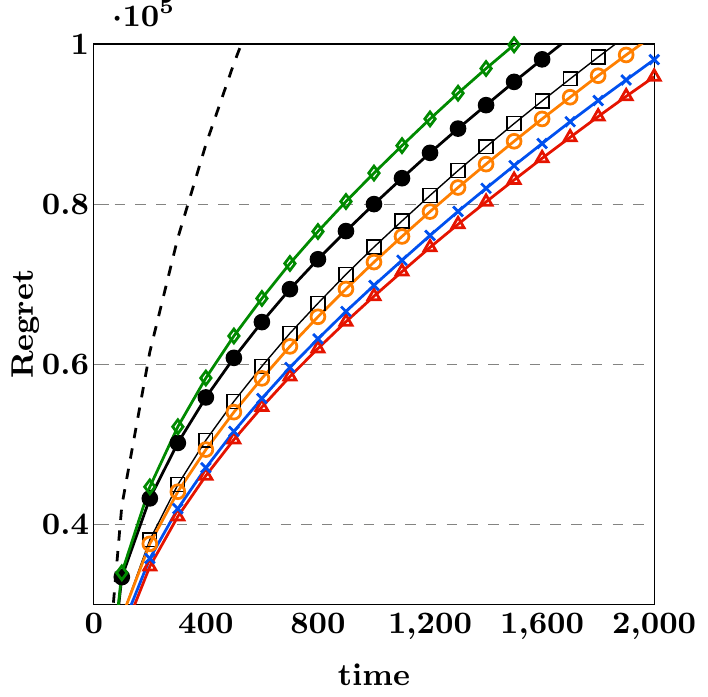}
             \label{fig:h2ucb_yelp_regret}}\\
    \vspace{-4mm}
   \subfloat[Toutiao dataset]
   {
       \includegraphics[width=0.6\linewidth]{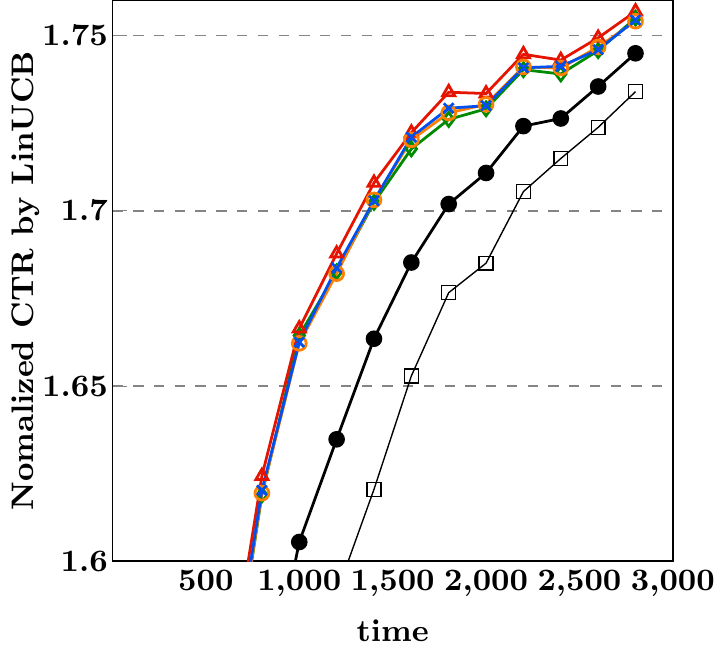}
             \label{fig:h2ucb_toutiao}} \\
   \includegraphics[width=0.8\linewidth]{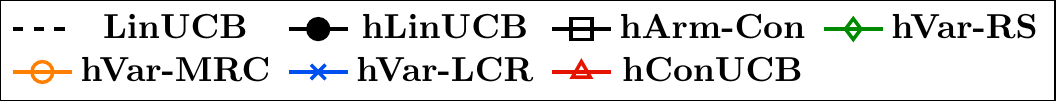}
   \vspace{-4mm}
   \caption{Experimental results of conversational hLinUCB. }
   \label{fig:h2ucb_exp}
 \end{figure}


\section{Related Work}
To the best of our knowledge, this is the first work to study
the conversational contextual bandit problem.
Our work is closely related to the following two lines of research. 

\noindent  
\textbf{Contextual bandit algorithms.}
Contextual bandit is a popular technique to
address the exploration-exploitation trade-off, in various application tasks such as recommendation. 
LinUCB~\cite{li2010contextual} and Thompson Sampling~\cite{agrawal2013thompson}
are two representative algorithms for contextual bandits.
A large number of algorithms have been proposed to accelerate the learning speed 
of LinUCB.  
For example, Bianchi et. al.~\cite{cesa2013gang,wu2016contextual,li2016collaborative}
leverage relationship among users.
Wang et. al.~\cite{wang2016learning} learn hidden features,
and Zheng et.al.~\cite{zeng2016online} make use of a time-changing reward function.  
The CoFineUCB~\cite{yue2012hierarchical} algorithm proposed by Yue et. al. performs a coarse-to-fine hierarchical
exploration.
Different from them, our work tries to accelerate the learning speed through
conversation.
More importantly, 
as shown in Section~\ref{sec:discussion},
our approach is generic and can be applied to many existing algorithms.

It should be possible to incorporate the conversation mechanism into
Thompson Sampling~\cite{agrawal2013thompson}.
Thompson Sampling is a probability matching algorithm that samples $\bm{\theta}_t$ from
the posterior distribution.
Thus, one can define a hierarchical sampling approach that first samples
$\bm{\tilde{\theta}}_t$ according to conversational feedbacks,
and then samples $\bm{\theta}_t$ around $\bm{\tilde{\theta}}_t$ while considering
arm-level feedbacks.

\noindent
\textbf{Conversational recommender systems.} 
Christakopoulou \textit{et. al.}~\cite{christakopoulou2016towards} introduce the idea of conversational recommender systems. They conduct conversations by querying
whether the user likes the items selected by the bandits algorithm.
As shown in this paper, item-level conversations are less efficient
than key-term-level conversations.  
Other researchers~\cite{zhang2018towards,sun2018conversational,christakopoulou2018q}
further leverage recent advances in natural
language understanding to generate conversations and assist recommendation.  
Their works utilize conversation in user preference learning 
(via deep learning
or reinforcement learning)
without theoretical guarantee, 
while our work utilizes conversation to speed up contextual bandit learning 
with theoretical guarantee.   
Yu \textit{et. al.} \cite{yu2019visual} propose a dialog approach to speed up 
bandit learning in recommender systems.  
Their dialog approach recommends multiple items to the user and 
the user needs to provide feedback on why she 
likes or dislikes the recommended items.
In this paper, we explore another approach of using conversations, i.e, querying
key-terms occasionally to actively explore the user's preference.
Bu et. al~\cite{bu2018active} also
leverage conversations to assist item recommendation, however, their algorithm is
only applied to offline learning, while our algorithm, based on the bandit technique, is an online learning algorithm. Furthermore, our algorithm adaptively optimizes the question strategy through interactions and conversations.

\vspace{-2mm}
\section{Conclusion}
We formulate the \textit{conversational contextual bandit} problem by
incorporating a conversational mechanism into contextual bandit.
We design the ConUCB algorithm to adaptively optimize the arm selection
strategy and the key-term selection strategy through conversations and
arm-level interactions with the user.
Theoretical analysis shows that ConUCB can achieve a lower regret upper bound. Extensive experiments on synthetic dataset,
real datasets from Yelp and Toutiao demonstrate that ConUCB
indeed has a faster learning speed.
The generality of the approach of incorporating conversations into bandit algorithms 
is also demonstrated.

\section{Acknowledgments}
The work is supported by National Nature Science Foundation of
China (61902042) and the GRF 14201819.

\vspace{-2mm}
\section{Appendix}
\noindent
{\bf Proof of Lemma~\ref{lem:conf_int}:}
\begin{proof}
According to the closed-form solution of $\boldsymbol{\theta}_t$, we can get: 

\resizebox{0.99\columnwidth}{!}{
    \begin{minipage}{\columnwidth}
      \begin{align}
     &   \textstyle
    \boldsymbol{\theta_t} = \boldsymbol{M}_t^{-1} \boldsymbol{b}_t \nonumber \\
    & \textstyle= \left(\lambda
      \sum_{\tau=1}^{t-1} \boldsymbol{x}_{a,\tau}\boldsymbol{x}_{a,\tau}^T+(1-\lambda)\boldsymbol{I} \right)^{-1}( \lambda \sum_{\tau=1}^{t-1}\boldsymbol{x}_{a,\tau}r_{a,\tau}+(1-\lambda)\boldsymbol{\tilde{\theta}}_t) \nonumber \\
    & \textstyle= \boldsymbol{\theta_*} -(1-\lambda)\boldsymbol{M}_t^{-1}\boldsymbol{\theta_*}+\lambda \boldsymbol{M}_t^{-1}(\sum_{\tau=1}^{t-1}\boldsymbol{x}_{a,\tau}\epsilon_{\tau})+(1-\lambda)\boldsymbol{M}_t^{-1}\boldsymbol{\tilde{\theta}}_t \nonumber. 
  \end{align}
\end{minipage}}

The equality holds since $r_{a,\tau}=\bm{x}_{a,\tau}^T\bm{\theta}_*+\epsilon_{\tau}$.
  With $\boldsymbol{\theta}_* \approx \boldsymbol{\tilde{\theta}}_*$,

  \resizebox{0.95\columnwidth}{!}{
    \begin{minipage}{\columnwidth}
\begin{align}
  \textstyle
  \quad \boldsymbol{\theta}_t -\boldsymbol{\theta}_*&
  \textstyle =(1-\lambda)\boldsymbol{M}_t^{-1}(\boldsymbol{\tilde{\theta}}_t-\boldsymbol{\tilde{\theta}}_*) + \lambda \boldsymbol{M}_t^{-1}(\sum_{\tau=1}^{t-1}\boldsymbol{x}_{a,\tau}\epsilon_{\tau}). 
  \label{equ:theta_diff}
\end{align}
\end{minipage}}

Thus, with  $\|\boldsymbol{\tilde{\theta}}_t-\boldsymbol{\tilde{\theta}}_*\|_{\boldsymbol{\tilde{M}}_t}
\leq \tilde{\alpha}_t$,  we can get:

\resizebox{0.99\columnwidth}{!}{
    \begin{minipage}{\columnwidth}
 \begin{align}
     & \textstyle |\boldsymbol{x}_{a,t}^T\boldsymbol{\theta_t}-\boldsymbol{x}_{a,t}^T\boldsymbol{\theta}_*|
\nonumber \\
& = (1-\lambda)|\boldsymbol{x}_{a,t}^T\boldsymbol{M}_t^{-1}(\boldsymbol{\tilde{\theta}}_t-\boldsymbol{\tilde{\theta}}_*)| + \lambda|\boldsymbol{x}_{a,t}^T\boldsymbol{M}_t^{-1}(\sum_{\tau=1}^{t-1}\boldsymbol{x}_{a,\tau}\epsilon_{\tau})| \nonumber \\
     &\textstyle \leq (1-\lambda)\tilde{\alpha}_t  \|\boldsymbol{x}_{a,t}^T\boldsymbol{M}_t^{-1}\|_{\boldsymbol{\tilde{M}}_t^{-1}} +\lambda \|\boldsymbol{x}_{a,t}\|_{\boldsymbol{M}_t^{-1}}\|\sum_{\tau=1}^{t-1}\boldsymbol{x}_{a,\tau}\epsilon_{\tau}\|_{\boldsymbol{M}_t^{-1}} \nonumber.
 \end{align}
\end{minipage}}

    Let $\alpha_t$ denote the upper bound of
    $\|\sum_{\tau=1}^{t-1}\boldsymbol{x}_{a,\tau}\epsilon_{\tau}\|_{\boldsymbol{M}_t^{-1}}$,
    then Theorem 1 in paper~\cite{abbasi2011improved} suggests that

 \resizebox{0.99\columnwidth}{!}{
    \begin{minipage}{\columnwidth}
 \begin{align}
\textstyle \alpha_t = \sqrt{2\log\left( \frac{\det(\bm{M}_t)^{1/2} \det((1-\lambda)\bm{I})^{-1/2}}{\sigma} \right)} \leq \sqrt{d\log\left( (1+\frac{\lambda t}{(1-\lambda)d})/\sigma \right)}. \nonumber
 \end{align}
\end{minipage}}\\

This proof is then complete.
\end{proof}

\noindent
{\bf Proof of Theorem~\ref{thm:key-term-selection}:}
\begin{proof}
  According to Theorem~\ref{thm:optimal_ub}, to minimize  $\mathbb{E} [ \|\boldsymbol{X}_t\boldsymbol{\theta}_t
 -\boldsymbol{X}_t\boldsymbol{\theta}_*\|_2^2 ]$, the key-term
selection strategy needs to select key-term $k$ 
to minimize 
\begin{equation}
 \hspace{-5mm} \tr\left(
     \boldsymbol{X}_t\boldsymbol{M}_t^{-1}(\boldsymbol{\tilde{M}}_{t-1}+\boldsymbol{\tilde{x}}_{k,t}\boldsymbol{\tilde{x}}_{k,t}^T)^{-1}\boldsymbol{M}_t^{-1}\boldsymbol{X}_t^T \nonumber
    \right). 
\end{equation} 
Then, using Woodbury matrix identity, it is equal to minimize: 
\begin{align}
     \textstyle  \tr\left(\boldsymbol{X}_t \boldsymbol{M}_t^{-1}\boldsymbol{\tilde{M}}_{t-1}^{-1}\boldsymbol{M}_t^{-1}\boldsymbol{X}_t^T \right) -\frac{\| \boldsymbol{X}_t\boldsymbol{M}_t^{-1}\boldsymbol{\tilde{M}}_{t-1}^{-1}\boldsymbol{\tilde{x}}_{k,t}\|_2^2}{ 1+\boldsymbol{\tilde{x}}_{k,t}^T\boldsymbol{\tilde{M}}_{t-1}^{-1}\boldsymbol{\tilde{x}}_{k,t}}. \nonumber
\end{align}    
With the interaction history, $\tr\left(\boldsymbol{X}_t
  \boldsymbol{M}_t^{-1}\boldsymbol{\tilde{M}}_{t-1}^{-1}\boldsymbol{M}_t^{-1}\boldsymbol{X}_t^T
\right)$ is a constant.
Thus, the proof is complete.
 \end{proof}

\begin{thm}
  Given the interaction history at both arm-level and key-term level up to round $t$, 
  we have:
  \begin{align}
  \textstyle & (1) \quad  \min_k   \mathbb{E} [ \|\boldsymbol{X}_t\boldsymbol{\theta}_t -\boldsymbol{X}_t\boldsymbol{\theta}_*\|_2^2 ]
    \Leftrightarrow 
               \min_k  \mathbb{E} [ \|\boldsymbol{X}_t \boldsymbol{M}_t^{-1} (\boldsymbol{\tilde{\theta}}_t -\boldsymbol{\tilde{\theta}}_*)\|_2^2 ]. \nonumber \\
   \textstyle & (2) \quad \mathbb{E} [ \|\boldsymbol{X}_t \boldsymbol{M}_t^{-1} (\boldsymbol{\tilde{\theta}}_t -\boldsymbol{\tilde{\theta}}_*)\|_2^2] 
     \leq  ( \|\boldsymbol{\theta}_*\|_2^2 +1) \nonumber  \tr\left(\boldsymbol{X}_t\boldsymbol{M}_t^{-1}\boldsymbol{\tilde{M}}_{t}^{-1}\boldsymbol{M}_t^{-1}\boldsymbol{X}_t^T
    \right). \nonumber 
   \end{align}
   \label{thm:optimal_ub}
\end{thm}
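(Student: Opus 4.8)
The plan is to treat the two parts separately, reusing the closed form of $\bm{\theta}_t-\bm{\theta}_*$ obtained in the proof of Lemma~\ref{lem:conf_int} and, for part~(2), invoking monotonicity of the trace under the positive semidefinite (PSD) order.

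For part~(1), I would begin from the identity $\bm{X}_t(\bm{\theta}_t-\bm{\theta}_*) = (1-\lambda)\,\bm{X}_t\bm{M}_t^{-1}(\tilde{\bm{\theta}}_t-\tilde{\bm{\theta}}_*) + \lambda\,\bm{X}_t\bm{M}_t^{-1}\sum_{\tau=1}^{t-1}\bm{x}_{a_\tau,\tau}\epsilon_\tau$, which is Eq.~(\ref{equ:theta_diff}) pre-multiplied by $\bm{X}_t$. Expanding $\mathbb{E}[\|\bm{X}_t(\bm{\theta}_t-\bm{\theta}_*)\|_2^2]$ gives three terms. The crucial observation is that the key-term queried at round $t$ enters only through $\tilde{\bm{M}}_t=\tilde{\bm{M}}_{t-1}+\tilde{\bm{x}}_{k,t}\tilde{\bm{x}}_{k,t}^T$ and $\tilde{\bm{b}}_t$, hence only through $\tilde{\bm{\theta}}_t$; it affects neither $\bm{M}_t$ and $\bm{X}_t$ nor the arm-level noise sum. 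Hence the pure arm-noise term $\lambda^2\mathbb{E}[\|\bm{X}_t\bm{M}_t^{-1}\sum_\tau\bm{x}_{a_\tau,\tau}\epsilon_\tau\|_2^2]$ is constant in $k$, and the cross term is also constant in $k$ (indeed it vanishes, because the conversational noises $\{\tilde{\epsilon}_{k,\tau}\}$ and the arm-level noises $\{\epsilon_\tau\}$ are independent zero-mean sequences, so the $k$-dependent random part of $\tilde{\bm{\theta}}_t$ is uncorrelated with the arm-noise sum while the deterministic remainder pairs with a zero-mean quantity). Thus only $(1-\lambda)^2\mathbb{E}[\|\bm{X}_t\bm{M}_t^{-1}(\tilde{\bm{\theta}}_t-\tilde{\bm{\theta}}_*)\|_2^2]$ varies with $k$, which gives the equivalence.

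For part~(2), write the objective as a trace: with $\bm{A}\triangleq\bm{M}_t^{-1}\bm{X}_t^T\bm{X}_t\bm{M}_t^{-1}\succeq 0$, one has $\mathbb{E}[\|\bm{X}_t\bm{M}_t^{-1}(\tilde{\bm{\theta}}_t-\tilde{\bm{\theta}}_*)\|_2^2] = \tr\!\big(\bm{A}\,\mathbb{E}[(\tilde{\bm{\theta}}_t-\tilde{\bm{\theta}}_*)(\tilde{\bm{\theta}}_t-\tilde{\bm{\theta}}_*)^T]\big)$. Using $\tilde{\bm{\theta}}_t=\tilde{\bm{M}}_t^{-1}\tilde{\bm{b}}_t$ and $\tilde{r}_{k,\tau}=\tilde{\bm{x}}_{k,\tau}^T\tilde{\bm{\theta}}_*+\tilde{\epsilon}_{k,\tau}$ gives $\tilde{\bm{\theta}}_t-\tilde{\bm{\theta}}_*=\tilde{\bm{M}}_t^{-1}\bm{\xi}$ with $\bm{\xi}\triangleq\sum_{\tau}\sum_{k'\in\mathcal{K}_\tau}\tilde{\bm{x}}_{k',\tau}\tilde{\epsilon}_{k',\tau}-\tilde{\lambda}\tilde{\bm{\theta}}_*$. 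Since the $\tilde{\epsilon}$'s are independent, zero-mean and $1$-sub-Gaussian, the noise part of $\mathbb{E}[\bm{\xi}\bm{\xi}^T]$ is $\preceq\sum_{\tau,k'}\tilde{\bm{x}}_{k',\tau}\tilde{\bm{x}}_{k',\tau}^T=\tilde{\bm{M}}_t-\tilde{\lambda}\bm{I}\preceq\tilde{\bm{M}}_t$, while the ridge piece $\tilde{\lambda}^2\tilde{\bm{\theta}}_*\tilde{\bm{\theta}}_*^T$, using $\tilde{\bm{M}}_t\succeq\tilde{\lambda}\bm{I}$ and $\tilde{\bm{\theta}}_*\approx\bm{\theta}_*$, is $\preceq\|\bm{\theta}_*\|_2^2\,\tilde{\bm{M}}_t$; hence $\mathbb{E}[\bm{\xi}\bm{\xi}^T]\preceq(\|\bm{\theta}_*\|_2^2+1)\tilde{\bm{M}}_t$ and, conjugating by $\tilde{\bm{M}}_t^{-1}$, $\mathbb{E}[(\tilde{\bm{\theta}}_t-\tilde{\bm{\theta}}_*)(\tilde{\bm{\theta}}_t-\tilde{\bm{\theta}}_*)^T]\preceq(\|\bm{\theta}_*\|_2^2+1)\tilde{\bm{M}}_t^{-1}$. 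Finally, since $\tr(\bm{A}\bm{B}_1)\le\tr(\bm{A}\bm{B}_2)$ whenever $\bm{A}\succeq 0$ and $\bm{B}_1\preceq\bm{B}_2$, cyclicity of the trace gives $\tr\!\big(\bm{A}\,\mathbb{E}[(\tilde{\bm{\theta}}_t-\tilde{\bm{\theta}}_*)(\tilde{\bm{\theta}}_t-\tilde{\bm{\theta}}_*)^T]\big)\le(\|\bm{\theta}_*\|_2^2+1)\tr(\bm{A}\tilde{\bm{M}}_t^{-1})=(\|\bm{\theta}_*\|_2^2+1)\tr(\bm{X}_t\bm{M}_t^{-1}\tilde{\bm{M}}_t^{-1}\bm{M}_t^{-1}\bm{X}_t^T)$, which is part~(2).

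The main obstacle I anticipate is the clean PSD bound in part~(2): peeling off the ridge bias $\tilde{\lambda}\tilde{\bm{\theta}}_*$ and reabsorbing it without losing the exact constant $\|\bm{\theta}_*\|_2^2+1$ is delicate, and is where the approximation $\tilde{\bm{\theta}}_*\approx\bm{\theta}_*$ and the normalizations $\|\bm{x}_{a,t}\|_2=1$, $\sum_k w_{a,k}=1$ come in. Making the vanishing of the cross term in part~(1) rigorous also needs the filtration/independence structure to be stated carefully; the trace manipulations are otherwise routine.
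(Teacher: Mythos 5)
Your proposal is correct and follows essentially the same route as the paper: part (1) via the decomposition of $\bm{\theta}_t-\bm{\theta}_*$ into a key-term-dependent term and an arm-noise term unaffected by the choice of $k$, and part (2) via splitting $\tilde{\bm{\theta}}_t-\tilde{\bm{\theta}}_*$ into the ridge bias $-\tilde{\lambda}\tilde{\bm{M}}_t^{-1}\tilde{\bm{\theta}}_*$ and the accumulated conversational noise, bounded using $\mathbb{E}[\tilde{\bm{\epsilon}}_t\tilde{\bm{\epsilon}}_t^T]\preceq \bm{I}$, $\sum_{\tau,k}\tilde{\bm{x}}_{k,\tau}\tilde{\bm{x}}_{k,\tau}^T\preceq\tilde{\bm{M}}_t$, $\tilde{\bm{M}}_t\succeq\tilde{\lambda}\bm{I}$, and $\tilde{\bm{\theta}}_*\approx\bm{\theta}_*$. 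Your packaging of part (2) as a PSD bound on the second-moment matrix plus trace monotonicity is only a cosmetic reorganization of the paper's $A_1$/$A_2$ bounds, and it inherits the same looseness in tracking the ridge constant (strictly the bias term yields $\tilde{\lambda}\|\bm{\theta}_*\|_2^2$ rather than $\|\bm{\theta}_*\|_2^2$ unless $\tilde{\lambda}\le 1$) that is already present in the paper's own proof.
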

\begin{proof}
      With Eq.~(\ref{equ:theta_diff}), we can get:
      \begin{align}
       & \quad \textstyle \min_k \mathbb{E}\|\boldsymbol{X}_t\boldsymbol{\theta}_t -\boldsymbol{X}_t\boldsymbol{\theta}_*\|_2^2 \nonumber \\
      & \textstyle =  \min_k \mathbb{E}\|(1-\lambda)\boldsymbol{X}_t \boldsymbol{M}_t^{-1}(\boldsymbol{\tilde{\theta}}_t- \boldsymbol{\tilde{\theta}}_*) +\lambda \boldsymbol{X}_t \boldsymbol{M}_t^{-1}(\sum_{\tau=1}^{t-1}\boldsymbol{x}_{\tau}\epsilon_{\tau})\|  \nonumber . 
      \end{align}
      Note that key-term selection does not affect $\{\epsilon_{\tau}\}_{\tau=1}^{t-1}$ , thus
      we can get the first observation:
\begin{align}
    \textstyle  \min_k   \mathbb{E}\|\boldsymbol{X}_t\boldsymbol{\theta}_t -\boldsymbol{X}_t\boldsymbol{\theta}_*\|_2^2
    \Leftrightarrow 
    \min_k   \mathbb{E}\|\boldsymbol{X}_t \boldsymbol{M}_t^{-1} (\boldsymbol{\tilde{\theta}}_t -\boldsymbol{\tilde{\theta}}_*)\|_2^2. \nonumber
 \end{align}
  For the second observation, according to the closed-form of $\bm{\tilde{\theta}}_t$, we can further infer that:
    \begin{align}
      & \textstyle \boldsymbol{\tilde{\theta}}_t -\boldsymbol{\tilde{\theta}}_*  =-\tilde{\lambda}\boldsymbol{\tilde{M}}_t^{-1}\boldsymbol{\tilde{\theta}}_*+\boldsymbol{\tilde{M}}_t^{-1} \left(\sum_{\tau=1}^t \sum_{k \in \mathcal{K}_{\tau}}  \boldsymbol{\tilde{x}}_{k,\tau}\tilde{\epsilon}_{k,\tau} \right) . \nonumber 
    \end{align}
    Thus, we can get:
    
  \resizebox{0.99\columnwidth}{!}{
    \begin{minipage}{\columnwidth}
    \begin{align}
      & \textstyle \mathbb{E}\|\boldsymbol{X}_t \boldsymbol{M}_t^{-1}( \boldsymbol{\tilde{\theta}}_t -\boldsymbol{\tilde{\theta}}_*)\|_2^2 \nonumber\\
      & \textstyle \leq \tilde{\lambda} \underbrace{\|\boldsymbol{X}_t \boldsymbol{M}_t^{-1} \boldsymbol{\tilde{M}}_t^{-1} \boldsymbol{\tilde{\theta}}_*\|_2^2}_{A_1} + \underbrace{\mathbb{E}\|\boldsymbol{X}_t \boldsymbol{M}_t^{-1} \boldsymbol{\tilde{M}}_t^{-1}(\sum_{\tau=1}^t \sum_{k \in \mathcal{K}_{\tau}}  \boldsymbol{\tilde{x}}_{k,\tau}\tilde{\epsilon}_{k,\tau})\|_2^2}_{A_2} \nonumber
    \end{align}
  \end{minipage}}
  \begin{itemize}
  \item \textbf{Bound $A_1$.} The first term is bounded by:
    \begin{equation}
      \begin{aligned}
      \textstyle  \|\boldsymbol{X}_t \boldsymbol{M}_t^{-1} \boldsymbol{\tilde{M}}_t^{-1} \boldsymbol{\tilde{\theta}}_*\|_2^2 & \textstyle \leq \|\boldsymbol{X}_t \boldsymbol{M}_t^{-1} \boldsymbol{\tilde{M}}_t^{-1/2}\|_F^2 \|\boldsymbol{\tilde{M}}_t^{-1/2}\boldsymbol{\tilde{\theta}}_*\|_2^2 \\ \nonumber
        & \textstyle \leq \frac{\|\bm{\tilde{\theta}}_*\|_2^2}{\tilde{\lambda}}
\tr(\boldsymbol{X}_t \boldsymbol{M}_t^{-1}\boldsymbol{\tilde{M}}_t^{-1}\boldsymbol{M}_t^{-1}\boldsymbol{X}_t^T)        \end{aligned}
    \end{equation}
\item \textbf{Bound $A_2$.} We denote $\sum_{\tau=1}^t \sum_{k \in \mathcal{K}_{\tau}} 
  \boldsymbol{\tilde{x}}_{k,\tau}\tilde{\epsilon}_{k,\tau}=\boldsymbol{\tilde{X}}_t\boldsymbol{\tilde{\epsilon}}_t$,
  where $\boldsymbol{\tilde{X}}_t \in R^{ d \times b(t)}$. The second term is bounded by:
  \begin{equation}
    \begin{aligned}
      & \textstyle \mathbb{E}\|\boldsymbol{X}_t \boldsymbol{M}_t^{-1}\boldsymbol{\tilde{M}}_t^{-1}(\sum_{\tau=1}^t \sum_{k \in \mathcal{K}_{\tau}}  \boldsymbol{\tilde{x}}_{k,\tau}\tilde{\epsilon}_{k,\tau})\|_2^2 \\
      & \textstyle=\mathbb{E}[\tr( \boldsymbol{X}_t \boldsymbol{M}_t^{-1} \boldsymbol{\tilde{M}}_t^{-1} \boldsymbol{\tilde{X}}_t \boldsymbol{\tilde{\epsilon}}_t \boldsymbol{\tilde{\epsilon}}_t^T \boldsymbol{\tilde{X}}_t^T  \boldsymbol{\tilde{M}}_t^{-1}\boldsymbol{M}_t^{-1}\boldsymbol{X}_t^T)] \\ \nonumber
      & \textstyle \leq \tr(\boldsymbol{X}_t\boldsymbol{M}_t^{-1}\boldsymbol{\tilde{M}}_t^{-1}\boldsymbol{M}_t^{-1}\boldsymbol{X}_t^T)
    \end{aligned}
  \end{equation}
  The first inequality is due to $\mathbb{E}[ \boldsymbol{\tilde{\epsilon}}_t
  \boldsymbol{\tilde{\epsilon}}_t^T ]\leq I$.
  \end{itemize}
  Finally, with $\bm{\tilde{\theta}}_* \approx \bm{\theta}_*$, we can finish the proof.
\end{proof}

{\bf Proof of Lemma~\ref{lem:fixed_theta_bound}:}
 \begin{proof}
  In the following analysis, we take
  $\boldsymbol{\tilde{x}}_{k,\tau}=\frac{\sum_{a \in \mathcal{A}}
    w_{a,k}\boldsymbol{x}_{a,\tau}}{\sum_{a \in \mathcal{A}} w_{a,k}}$.

  \resizebox{\columnwidth}{!}{
    \begin{minipage}{\columnwidth}
    \begin{align}
     \textstyle  \boldsymbol{\tilde{\theta}}_t & \textstyle = \boldsymbol{\tilde{M}_t}^{-1} \boldsymbol{b}_t  \nonumber \\
    & \textstyle  =\boldsymbol{\tilde{\theta}}_*-\tilde{\lambda}\boldsymbol{\tilde{M}}_t^{-1}\boldsymbol{\tilde{\theta}}_* + \boldsymbol{\tilde{M}}_t^{-1}\left(\sum_{\tau=1}^t \sum_{k \in \mathcal{K}_{\tau}}  \boldsymbol{\tilde{x}}_{k,\tau}\tilde{\epsilon}_{k,\tau} \right) \nonumber
    \end{align}
  \end{minipage}}
Thus, for a fixed $x$,

   \resizebox{\columnwidth}{!}{
    \begin{minipage}{\columnwidth}
     \begin{align}
     \textstyle |\boldsymbol{x}^T\boldsymbol{\tilde{\theta}}_t -\boldsymbol{x}^T\boldsymbol{\tilde{\theta}_*}| & \leq \tilde{\lambda}|\boldsymbol{x}^T\boldsymbol{\tilde{M}}_t^{-1}\boldsymbol{\tilde{\theta}}_*|+ |\boldsymbol{x}^T\boldsymbol{\tilde{M}}_t^{-1} \left( \sum_{\tau=1}^t \sum_{k \in \mathcal{K}_{\tau}}  \boldsymbol{\tilde{x}}_{k,\tau}\tilde{\epsilon}_{k,\tau} \right)| \nonumber . 
     \end{align}
   \end{minipage}} 
   Since $|\boldsymbol{x}^T\boldsymbol{\tilde{M}}_t^{-1}\boldsymbol{\tilde{\theta}}_*| \leq \|\boldsymbol{x}^T\boldsymbol{\tilde{M}}_t^{-1}\|_2\|\boldsymbol{\tilde{\theta}}_*\|_2  \leq \frac{\|\boldsymbol{x}\|_{\boldsymbol{\tilde{M}}_t^{-1}}}{\sqrt{\tilde{\lambda}}} \|\boldsymbol{\tilde{\theta}}_*\|_2$, we can bound the first term: $\tilde{\lambda}|\boldsymbol{x}^T\boldsymbol{\tilde{M}}_t^{-1}\boldsymbol{\tilde{\theta}}_*| \leq \sqrt{\tilde{\lambda}} \|\boldsymbol{x}\|_{\boldsymbol{\tilde{M}}_t^{-1}} \|\boldsymbol{\tilde{\theta}}_*\|_2$.
   We next try to  bound the second term.
   According to the key-term-selection strategy in Eq.~(\ref{equ:optimized_strategy}), we can get
  $$ \textstyle \mathbb{E}[ \sum_{\tau=1}^t \sum_{k \in \mathcal{K}_{\tau}} \tilde{\epsilon}_{k,\tau}]= \sum_{\tau=1}^t \sum_{k \in \mathcal{K}_{\tau}} \mathbb{E}[\tilde{\epsilon}_{k,\tau}]=0,$$ 
  and thus by Azuma's inequality, for a fixed $x$ at round $t$, with $\alpha=\sqrt{\frac{1}{2} \log \frac{2}{\sigma}}$, we have
  \begin{align}
    & \textstyle \mathbb{P} \left(|\boldsymbol{x}^T\boldsymbol{\tilde{M}}_t^{-1}(\sum_{\tau=1}^t \sum_{k \in \mathcal{K}_{\tau}} \boldsymbol{\tilde{x}}_{k,\tau}\tilde{\epsilon}_{k,\tau})| \geq \alpha \|\boldsymbol{x}\|_{\boldsymbol{\tilde{M}}_t^{-1}} \right) \nonumber \\
    & \textstyle \leq 2 \exp\left(-\frac{2\alpha^2\boldsymbol{x}^T\boldsymbol{\tilde{M}}_t^{-1}\boldsymbol{x}}{\sum_{\tau=1}^t \sum_{k \in \mathcal{K}_{\tau}}  (\boldsymbol{x}^T\boldsymbol{\tilde{M}}_t^{-1}\boldsymbol{\tilde{x}}_{k,\tau})^2} \right) \leq  2\exp(-2\alpha^2)) =\sigma \nonumber,
  \end{align}
 since

   \resizebox{0.99\columnwidth}{!}{
    \begin{minipage}{\columnwidth}
  \begin{align}
  & \textstyle  \boldsymbol{x}^T\boldsymbol{\tilde{M}}_t^{-1}\boldsymbol{x}  \textstyle = \boldsymbol{x}^T\boldsymbol{\tilde{M}}_t^{-1} \left( \tilde{\lambda}\boldsymbol{I} +\sum_{\tau=1}^t \sum_{k \in \mathcal{K}_{\tau}} \boldsymbol{\tilde{x}}_{k,\tau}\boldsymbol{\tilde{x}}_{k,\tau}^T \right)\boldsymbol{\tilde{M}}_t^{-1}\boldsymbol{x} \nonumber \\ \nonumber
    & \textstyle \leq \sum_{\tau=1}^t \sum_{k \in \mathcal{K}_{\tau}}  (\boldsymbol{x}^T\boldsymbol{\tilde{M}}_t\boldsymbol{\tilde{x}}_{k,\tau})^2.
  \end{align}
  \end{minipage}} 
  Thus,  for a fixed $x$ and fixed $t$, with probability $1-\sigma$,
  \begin{equation}
   \begin{aligned}
     \textstyle  \left<\boldsymbol{x}^T,\boldsymbol{\tilde{\theta}}_t-\boldsymbol{\tilde{\theta}}_* \right>& \leq \left(\sqrt{\tilde{\lambda}}\|\boldsymbol{\tilde{\theta}}_*\|_2+\sqrt{\frac{1}{2} \log \frac{2}{\sigma}} \right) \|\boldsymbol{x}\|_{\boldsymbol{\tilde{M}}_t^{-1}} \nonumber .
     \label{equ:x_theta_bound_f}
   \end{aligned}
 \end{equation}
     Next, using thee above bound,  we can bound
     $\|\bm{\tilde{\theta}}_t -\bm{\tilde{\theta}}_*\|_{\bm{\tilde{M}}_t}$, where:
     \begin{equation}
       \textstyle  \|\boldsymbol{\tilde{\theta}}_t -\boldsymbol{\tilde{\theta}}_*\|_{\boldsymbol{\tilde{M}}_t}=<\boldsymbol{\tilde{M}}_t^{1/2}\boldsymbol{X}, \boldsymbol{\tilde{\theta}}_t-\boldsymbol{\tilde{\theta}}_*>, \boldsymbol{X}=\frac{\boldsymbol{\tilde{M}}_t^{1/2}(\boldsymbol{\tilde{\theta}}_t-\boldsymbol{\tilde{\theta}}_*)}{ \|\boldsymbol{\tilde{\theta}}_t -\boldsymbol{\tilde{\theta}}_*\|_{\boldsymbol{\tilde{M}}_t}} \nonumber.
     \end{equation}

     We follow the \textit{covering argument} in Chapter 20 of \cite{lattimore2018bandit} to prove.
     First, we identify a finite set $C_{\epsilon} \subset R^d$ such that
     whatever value $\boldsymbol{X}$ takes, there exists some $\boldsymbol{x} \in C_{\epsilon}$ that are
     $\epsilon$-close to $\boldsymbol{X}$.
     By definition, we have $\|\boldsymbol{X}\|_2^2=1$, which means $\boldsymbol{X} \in S^{d-1}=\{\boldsymbol{x} \in
     R^d: \|\boldsymbol{x}\|_2=1\}$.
     Thus, it is sufficient to cover $S^{d-1}$.
     Tor-lattimore et. al.~\cite{lattimore2018bandit} has proven the following Lemma.
     \begin{lem}
       There exists a set $C_{\epsilon} \subset R^d$ with $|C_{\epsilon}|\leq
       (3/\epsilon)^d$ such that for all $x \in S^{d-1}$, there exist a $y \in
       C_{\epsilon}$ with $\|x-y\|\leq \epsilon$.
     \end{lem}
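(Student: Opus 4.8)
The plan is to prove this by a standard volumetric packing argument. First I would let $C_\epsilon \subseteq S^{d-1}$ be a \emph{maximal} subset of the unit sphere with the property that any two distinct points of $C_\epsilon$ are at Euclidean distance strictly greater than $\epsilon$. Such a maximal set exists (e.g.\ by Zorn's lemma), and in fact the volume bound derived below forces $|C_\epsilon|$ to be finite, so one may equivalently build $C_\epsilon$ greedily and argue that the process terminates.

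Next I would verify the covering property. Take any $x \in S^{d-1}$. If $x \in C_\epsilon$, then $\|x-x\| = 0 \le \epsilon$ and we are done. Otherwise, by maximality of $C_\epsilon$ the enlarged set $C_\epsilon \cup \{x\}$ must violate the $>\epsilon$ separation condition, which can only happen because there is some $y \in C_\epsilon$ with $\|x-y\| \le \epsilon$. Hence every point of $S^{d-1}$ lies within distance $\epsilon$ of a point of $C_\epsilon$, i.e.\ $C_\epsilon$ is an $\epsilon$-cover of the sphere.

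It then remains to bound $|C_\epsilon|$. Consider the open balls $B(y,\epsilon/2)$ for $y \in C_\epsilon$. Since the points of $C_\epsilon$ are pairwise more than $\epsilon$ apart, the triangle inequality shows these balls are pairwise disjoint; and since $\|y\|_2 = 1$ for each $y \in C_\epsilon$, each ball is contained in $B(0, 1+\epsilon/2)$. Writing $V_d$ for the volume of the unit ball in $\mathbb{R}^d$ and comparing volumes,
\[
  |C_\epsilon| \, V_d (\epsilon/2)^d \;=\; \sum_{y \in C_\epsilon} \mathrm{Vol}\big(B(y,\epsilon/2)\big) \;\le\; \mathrm{Vol}\big(B(0, 1+\epsilon/2)\big) \;=\; V_d (1+\epsilon/2)^d ,
\]
so $|C_\epsilon| \le (1+\epsilon/2)^d/(\epsilon/2)^d = (1 + 2/\epsilon)^d$. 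In the relevant regime $\epsilon \in (0,1]$ we have $1 + 2/\epsilon \le 1/\epsilon + 2/\epsilon = 3/\epsilon$, which yields $|C_\epsilon| \le (3/\epsilon)^d$, as claimed. Finally, noting $C_\epsilon \subseteq S^{d-1} \subset \mathbb{R}^d$ matches the stated form.

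The argument is entirely routine and I do not anticipate a genuine obstacle. The only points deserving a little care are (i) confirming $C_\epsilon$ is finite, which is handled automatically by the volume inequality, and (ii) the observation that a maximal packing is automatically a cover — this is the crux, but it is immediate from maximality. Indeed, in the paper this is a textbook fact, so one may simply cite Lattimore and Szepesv\'ari rather than reproduce the argument.
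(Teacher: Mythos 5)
Your proof is correct: taking a maximal $\epsilon$-separated subset of $S^{d-1}$, noting that maximality makes it an $\epsilon$-cover, and comparing volumes of the disjoint radius-$\epsilon/2$ balls inside $B(0,1+\epsilon/2)$ gives $|C_{\epsilon}|\le(1+2/\epsilon)^d\le(3/\epsilon)^d$ for $\epsilon\le 1$, which is all that is needed (the paper applies it with $\epsilon=1/2$). The paper itself offers no proof of this lemma --- it simply cites the covering argument of Lattimore and Szepesv\'ari --- and your volumetric packing argument is precisely the standard proof behind that citation, so the two approaches coincide.
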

     Then we apply a union bound for the elements in $C_{\epsilon}$, we have:
     
       \resizebox{\columnwidth}{!}{
         \begin{minipage}{\columnwidth}
        \begin{align}
          \textstyle \mathbb{P}\left( \exists \boldsymbol{x}\in C_{\epsilon}, \left< \boldsymbol{\tilde{M}}_t^{1/2}x,\boldsymbol{\tilde{\theta}}_t-\boldsymbol{\tilde{\theta}}_* \right> \geq \left(\sqrt{\tilde{\lambda}}\|\boldsymbol{\tilde{\theta}}_*\|_2+\sqrt{\frac{1}{2} \log \frac{2|C_{\epsilon}|}{\sigma}}\right) \right) \leq \sigma \nonumber .
       \end{align}
     \end{minipage}}
   Then
   
     \resizebox{1.1\columnwidth}{!}{
         \begin{minipage}{\columnwidth}
       \begin{align}
         & \textstyle \|\boldsymbol{\tilde{\theta}}_t-\boldsymbol{\tilde{\theta}}_*\|_{\boldsymbol{\tilde{M}}_t} =\max_{\boldsymbol{x} \in S^{d-1}} \left< \boldsymbol{\tilde{M}}_t^{1/2}\boldsymbol{x}, \boldsymbol{\tilde{\theta}}_t-\boldsymbol{\tilde{\theta}}_* \right> \nonumber \\
         & \textstyle =\max_{\boldsymbol{x} \in S^{d-1}}\min_{\boldsymbol{y} \in C_{\epsilon}} \left[  \left< \boldsymbol{\tilde{M}}_t^{1/2}(\boldsymbol{x}-\boldsymbol{y}), \boldsymbol{\tilde{\theta}}_t-\boldsymbol{\tilde{\theta}}_* \right> + \left< \boldsymbol{\tilde{M}}_t^{1/2}\boldsymbol{y}, \boldsymbol{\tilde{\theta}}_t-\boldsymbol{\tilde{\theta}}_* \right>  \right] \nonumber \\
         & \textstyle \leq \max_{\boldsymbol{x} \in S^{d-1}} \min_{\boldsymbol{y} \in C_{\epsilon}} \left[\|\boldsymbol{\tilde{\theta}}_t-\boldsymbol{\tilde{\theta}}_*\|_{\boldsymbol{\tilde{M}}_t}\|\boldsymbol{x}-\boldsymbol{y}\|_2+
          \sqrt{\tilde{\lambda}}\|\boldsymbol{\tilde{\theta}}_*\|_2+\sqrt{\frac{1}{2} \log \frac{2|C_{\epsilon}|}{\sigma}}
           \right]  \nonumber \\
           & \textstyle \leq \epsilon  \|\boldsymbol{\tilde{\theta}}_t-\boldsymbol{\tilde{\theta}}_*\|_{\boldsymbol{\tilde{M}}_t} +
 \sqrt{\tilde{\lambda}}\|\boldsymbol{\tilde{\theta}}_*\|_2+\sqrt{\frac{1}{2} \log \frac{2|C_{\epsilon}|}{\sigma}} \nonumber .
        \end{align}
      \end{minipage}}
  We set $\epsilon=\frac{1}{2}$. Up to round $t$, we only update $\bm{\tilde{\theta}}_t$ at most $b(t)$ times, thus by the union bound, at each round $t$, with probability $1-\sigma$:
   \begin{equation}
\textstyle \quad \|\boldsymbol{\tilde{\theta}}_t-\boldsymbol{\tilde{\theta}}_*\|_{\boldsymbol{\tilde{M}}_t} \leq  \sqrt{2 \left (d \log6+\log (\frac{2b(t)}{\sigma}) \right)} +   2\sqrt{\tilde{\lambda}}\|\boldsymbol{\tilde{\theta}}_*\|_2 \nonumber .
       \end{equation}
This proof is then complete.  
\end{proof}

\noindent
{\bf Proof of Theorem~\ref{thm:regret_bound_fixed_scenario}:}
\vspace{-2mm}
\begin{proof}
Let $a_t^*$ denote the best arm at round $t$, and $c_t(\bm{x}_{a,t})=\alpha_t \|\bm{x}_{a,t}\|_{\bm{M}_t^{-1}}$, and $\tilde{c}_t(\bm{x}_{a,t})=\tilde{\alpha}_t
    \|\boldsymbol{x}_{a,t}^T\boldsymbol{M}_t^{-1}\|_{\boldsymbol{\tilde{M}}_t^{-1}}$.
Then the regret at round $t$ is:

 \resizebox{0.99\columnwidth}{!}{
    \begin{minipage}{\columnwidth}
 \begin{align}
   \textstyle R_t&= \textstyle \boldsymbol{\theta}_*^T\boldsymbol{x}_{a_t^*,t}- \boldsymbol{\theta}_*^T\boldsymbol{x}_{a_t,t}  \nonumber \\ 
    &\textstyle =\boldsymbol{\theta}_*^T\boldsymbol{x}_{a_t^*,t}-\boldsymbol{\theta}_t^T\boldsymbol{x}_{a_t^*,t}+\boldsymbol{\theta}_t^T\boldsymbol{x}_{a_t^*,t}+c_t(\boldsymbol{x}_{a_t^*,t}) + \tilde{c}_t(\boldsymbol{x}_{a_t^*,t}) \nonumber \\
    & \textstyle \quad -c_t(\boldsymbol{x}_{a_t^*,t}) - \tilde{c}_t(\boldsymbol{x}_{a_t^*,t})-\boldsymbol{\theta}_*^T\boldsymbol{x}_{a_t,t} \nonumber \\ 
    & \textstyle \leq (\boldsymbol{\theta}_*-\boldsymbol{\theta}_t)^T\boldsymbol{x}_{a_t^*,t}+ \boldsymbol{\theta}_t^T\boldsymbol{x}_{a_t,t} +c_t(\boldsymbol{x}_{a_t,t}) + \tilde{c}_t(\boldsymbol{x}_{a_t,t}) \nonumber \\ 
    & \textstyle \quad -c_t(\boldsymbol{x}_{a_t^*,t}) - \tilde{c}_t(\boldsymbol{x}_{a_t^*,t})-\boldsymbol{\theta}_*^T\boldsymbol{x}_{a_t,t} \tag{1} \\
    & \textstyle \leq 2 c_t(\boldsymbol{x}_{a_t,t}) + 2\tilde{c}_t(\boldsymbol{x}_{a_t,t}) \tag{2} \nonumber
 \end{align}
\end{minipage}}

\noindent
  Inequality $(1)$ is due to the arm selection strategy, i.e., $$a_t=\arg \max_{a\in \mathcal{A}_t} \boldsymbol{x}_{a,t}^T\boldsymbol{\theta}_t+c_t(\boldsymbol{x}_{a,t})+\tilde{c}_t(\boldsymbol{x}_{a,t}).$$
  Inequality $(2)$ is from
  $|\boldsymbol{x}_{a,t}^T(\boldsymbol{\theta}_t -\boldsymbol{\theta}_*)| \leq
  c_t(\boldsymbol{x}_{a,t}) +\tilde{c}_t(\boldsymbol{x}_{a,t}), \forall a \in \mathcal{A}$.
   For the following analysis, we abbreviate $\boldsymbol{x}_{a_t,t}$ as $\boldsymbol{x}_t$.
   Thus, the cumulative regret until $T$ is:

   \resizebox{1.05\columnwidth}{!}{
    \begin{minipage}{\columnwidth}
   \begin{align}
      & \textstyle R(T)= \textstyle \sum_{t=1}^{T} R_t \leq 2 \sum_{t=1}^T \left( c_t(\boldsymbol{x}_{a_t,t}) + \tilde{c}_t(\boldsymbol{x}_{a_t,t})\right) \nonumber \\
       & \textstyle \leq 2 \lambda \alpha_T \sqrt{T\sum_{t=1}^T  \|\boldsymbol{x}_t\|_{\boldsymbol{M}_t^{-1}}^2}  + 2(1-\lambda)\tilde{\alpha}_T \sqrt{T\sum_{t=1}^T \|\boldsymbol{x}_t^T\boldsymbol{M}_t^{-1}\|_{\boldsymbol{\tilde{M}}_t^{-1}}^2}  \nonumber .
     \label{equ:regret}
   \end{align}
  \end{minipage}}

\noindent
The last inequality is from the Cauchy-Schwarz inequality, and $\alpha_t, \tilde{\alpha}_t$
are non-decreasing.
Moreover,

  \resizebox{\columnwidth}{!}{
    \begin{minipage}{\columnwidth}
    \begin{equation}
  \begin{aligned}
   \textstyle \|\boldsymbol{x}_t^T\boldsymbol{M}_t^{-1}\|_{\boldsymbol{\tilde{M}}_t^{-1}}^2 =\boldsymbol{x}_t^T\boldsymbol{M}_t^{-1}\boldsymbol{\tilde{M}}_t^{-1}\boldsymbol{M}_t^{-1}\boldsymbol{x}_t
    \leq  \frac{1}{\tilde{\lambda}(1-\lambda)} \|\boldsymbol{x}_t\|_{\boldsymbol{M}_t^{-1}}^2, \label{appendix_equ:x_M_M_M_x}
  \end{aligned}
 \end{equation}
\end{minipage}}
Thus, the cumulative regret is bounded by:

   \resizebox{\columnwidth}{!}{
    \begin{minipage}{\columnwidth}  
\begin{equation}
   \textstyle R(T)  \leq 2 \left( \lambda\alpha_T + \sqrt{\frac{1-\lambda}{\tilde{\lambda}}}\tilde{\alpha}_T \right) \sqrt{T \sum_{t=1}^T \|\boldsymbol{x}_t\|_{\boldsymbol{M}_t^{-1}}^2}.
  \label{equ:cumulative-regret}
\end{equation}
\end{minipage}}

Together with $\alpha_t$ in Lemma~\ref{lem:conf_int}, and 
$\tilde{\alpha}_t$ in
Lemma~\ref{lem:fixed_theta_bound}, as well as Lemma 11 in \cite{abbasi2011improved},  we can finish the proof.
\end{proof}

\bibliographystyle{ACM-Reference-Format}
\bibliography{reference}

\end{document}